\documentclass{article}

\usepackage{microtype}
\usepackage{graphicx}
\usepackage{subcaption}
\usepackage{booktabs}
\usepackage{hyperref}

\usepackage[accepted]{icml2026}

\usepackage{amsmath}
\usepackage{amssymb}
\usepackage{mathtools}
\mathtoolsset{showonlyrefs=false}
\usepackage{amsthm}
\usepackage{algorithm}
\usepackage{algorithmic}

\usepackage[capitalize,noabbrev]{cleveref}
\usepackage{tikz}

\theoremstyle{plain}
\newtheorem{theorem}{Theorem}[section]
\newtheorem{proposition}[theorem]{Proposition}
\newtheorem{lemma}[theorem]{Lemma}
\newtheorem{corollary}[theorem]{Corollary}
\theoremstyle{definition}
\newtheorem{definition}[theorem]{Definition}

\theoremstyle{remark}
\newtheorem{remark}[theorem]{Remark}

\usepackage[textsize=tiny]{todonotes}

\definecolor{darkblue}{rgb}{0,0,.4}
\definecolor{darkgreen}{rgb}{0,.4,0}

\renewcommand{\P}{\mathbb{P}}

\crefname{algorithm}{Algorithm}{Algorithms}
\Crefname{algorithm}{Algorithm}{Algorithms}

\providecommand{\citet}[1]{\cite{#1}}

\newcommand{\R}{\mathbb{R}}
\newcommand{\E}{\mathbb{E}}
\newcommand{\x}{\mathbf{x}}

\newcommand{\Pareto}{\text{Pareto}}
\newcommand{\Exp}{\text{Exp}}

\icmltitlerunning{Tail Annealing for Heavy-Tailed Flow Matching}

\begin{document}

\twocolumn[
  \icmltitle{Tail Annealing for Heavy-Tailed Flow Matching}

  \begin{icmlauthorlist}
    \icmlauthor{Jean Pachebat}{cmap}
  \end{icmlauthorlist}

  \icmlaffiliation{cmap}{CMAP, \'Ecole Polytechnique, Institut Polytechnique de Paris, France}
  \icmlcorrespondingauthor{Jean Pachebat}{jean.pachebat@polytechnique.edu}

  \icmlkeywords{Score-Based Models, Heavy-Tailed Distributions, Generative Models, Exponential Family, Stochastic Interpolants}

  \vskip 0.3in
]

\printAffiliationsAndNotice{}


\begin{abstract}
Standard generative models struggle with heavy-tailed data: Lipschitz architectures cannot produce power-law tails from Gaussian noise, and interpolating between heavy-tailed data and Gaussians is ill-posed. We propose a simple fix: apply the soft-log transform $\phi(x) = \mathrm{sign}(x) \cdot \log(1 + |x|)$ coordinate-wise to data before training, then exponentiate samples after generation. A Hill diagnostic decides per-coordinate whether to transform, leaving light-tailed margins untouched at no added complexity. This compresses heavy tails into a range where standard flow matching succeeds, without heavy-tailed base distributions or architectural modifications. We provide theoretical intuition for why this works: the log-transform maps Pareto tails to exponentials, and the induced dynamics implement a form of tail annealing via power transformations. On a 144-configuration multivariate benchmark (3 copulas, $d$ up to 100, 4 tail indices), Log-FM dominates specialized baselines on $W_1$, CVaR$_{99}$, and extreme-quantile metrics, and is the only method with zero severe divergences across 2{,}880 runs.
\end{abstract}



\section{Introduction}

Heavy-tailed distributions are ubiquitous in real-world phenomena: financial returns, insurance claims, earthquake magnitudes, network degrees, and climate extremes. In these domains, rare events dominate outcomes: asset returns exhibit empirically documented fat tails that Gaussian models systematically underestimate \citep{cont2001}, and similar departures from Gaussianity are well-known in insurance \citep{embrechts2013modelling} and natural-hazard data. Generating realistic synthetic data for stress testing, simulation, or data augmentation requires generative models that capture tail behavior.

Yet standard generative models fail on heavy-tailed data, both in theory and in practice. \citet{jaini2020tails} proved that Lipschitz transformations preserve tail type: light-tailed inputs yield light-tailed outputs. Since most practical architectures (MLPs in GANs, coupling layers in normalizing flows, and diffusion denoisers) use Lipschitz-continuous functions for training stability, they cannot generate heavy tails from Gaussian noise without explicit tail-modifying components.

\paragraph{Prior approaches.} Existing methods address heavy tails through two strategies. \citet{jaini2020tails} replace Gaussian noise with heavy-tailed base distributions (Student-$t$), but this introduces training instability. \citet{hickling2025tail} take a different approach: generate samples from a standard flow, then apply a tail-modifying transformation to the \emph{output}. Their method requires estimating the tail parameter $\lambda = 1/\nu$ via the Hill estimator.

\paragraph{Our approach.} We propose a simple fix: transform data via $\phi(x) = \mathrm{sign}(x) \cdot \log(1 + |x|)$ before training, run standard flow matching in log-space, then apply $\phi^{-1}$ to generated samples. Unlike \citet{hickling2025tail} who transform outputs with estimated tail parameters, our input transform is parameter-free: the transformation $\phi$ works regardless of the true tail index.

We provide theoretical intuition for why this works. The log-transform compresses heavy tails: Pareto becomes approximately exponential. In log-space, both data and Gaussian noise have light tails, so standard interpolation is well-posed. The induced dynamics in original space implement a form of tail annealing via power transformations $X_0^{\alpha_t}$.

\paragraph{Contributions.}
\begin{itemize}
    \item We show that a parameter-free log-transform makes standard flow matching work for heavy-tailed data, without requiring tail estimation or architectural changes (\S\ref{sec:algorithm}).
    \item We analyze the tail annealing mechanism: the log-transform maps heavy tails to exponential tails, and the induced dynamics implement power transformations $X_0^{\alpha_t}$ that continuously adjust the tail index (\S\ref{sec:log_space_diffusion}).
    \item Our 144-configuration multivariate benchmark (3 copulas, 4 dimensions up to $d=100$, 4 tail indices, 20 replications, two margin types) shows Log-FM dominates baselines on tail metrics and is the only method with zero severe divergences (\S\ref{sec:experiments}). Real-data validation on Fama--French is in Appendix~\ref{app:real_data}.
\end{itemize}


\section{Background}
\label{sec:background}

We review heavy-tailed distributions and the fundamental barriers they pose for generative modeling.

\subsection{Heavy-Tailed Distributions}
\label{subsec:heavy_tails}

A distribution is \emph{heavy-tailed} if its tails decay slower than any exponential, formally $\E[e^{\lambda X}] = \infty$ for all $\lambda > 0$ \citep{nair2022heavy}. Heavy-tailed distributions arise naturally in finance \citep{cont2001}, insurance, climate science, and network traffic, where rare events have outsized impact.

The canonical example is the Pareto distribution with survival function $\P(X > t) = t^{-1/\gamma}$ for $t \geq 1$, where $\gamma > 0$ is the shape parameter (the GPD shape, equivalently the extreme-value index). Larger $\gamma$ means heavier tails: $\gamma < 1$ for finite mean, $\gamma < 1/2$ for finite variance. The Student-$t$ distribution with $\nu$ degrees of freedom has Pareto-like tails with effective $\gamma = 1/\nu$. Lognormal has $\gamma = 0$ (Gumbel max-domain of attraction): heavy in the MGF sense, but with no polynomial decay rate.

\paragraph{The log-transform.} A key observation: the logarithm maps heavy tails to lighter tails. If $X \sim \Pareto(\gamma)$, then $\log X \sim \Exp(1/\gamma)$, an exponential distribution with rate $1/\gamma$. This reflects that Pareto belongs to an exponential family with sufficient statistic $\log x$. The log-transform thus provides a natural bridge between power-law and exponential-family distributions; we develop the formal theory in Section~\ref{sec:log_space_diffusion}.

\paragraph{Regular variation.} Heavy-tailed distributions are characterized by \emph{regular variation}: $\P(X > tx)/\P(X > t) \to x^{-\alpha}$ as $t \to \infty$, with tail index $\alpha > 0$. This captures the essential property of polynomial tail decay without requiring the exact Pareto form. For $X \sim \Pareto(\gamma)$, the corresponding tail index is $\alpha = 1/\gamma$; Student-$t$, Burr, and log-gamma are also regularly varying. The log-transform maps any regularly varying distribution to one with exponential-type tails; see Proposition~\ref{prop:rv_log}.

\subsection{Generative Models}
\label{subsec:gen_models}

\paragraph{Normalizing flows.} Normalizing flows \citep{rezende2015variational,dinh2017density} learn an invertible transformation $T : \R^d \to \R^d$ mapping a base distribution (typically Gaussian) to the target. The change-of-variables formula gives exact likelihoods. However, for stability, flow architectures use Lipschitz-continuous components.

\paragraph{Diffusion and flow matching.} Denoising diffusion models \citep{ho2020,song2021} and flow matching \citep{lipman2023flow} construct a path from data to noise and learn to reverse it. Given data $X_0$ and noise $X_1 \sim \mathcal{N}(0, I)$, the interpolant is:
\begin{equation*}
    X_t = \alpha_t X_0 + \beta_t X_1,
\end{equation*}
with schedules satisfying $(\alpha_0, \beta_0) = (1, 0)$ and $(\alpha_1, \beta_1) = (0, 1)$. A neural network learns the velocity field $v_\theta(x_t, t)$ or score $\nabla \log p_t(x_t)$, enabling generation by integrating from noise to data.

\subsection{The Lipschitz Barrier}
\label{subsec:barriers}

\citet{jaini2020tails} proved that Lipschitz maps preserve tail type:
\begin{theorem}[\citet{jaini2020tails}]
If $Z$ is light-tailed and $T$ is Lipschitz, then $T(Z)$ is light-tailed.
\end{theorem}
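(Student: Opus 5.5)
We prove the statement in the moment-generating-function sense used in \S\ref{subsec:heavy_tails}. There, $Z$ is light-tailed if $\E[e^{\lambda \|Z\|}] < \infty$ for some $\lambda > 0$; equivalently, $\P(\|Z\| > t) \le C e^{-ct}$ for some $C, c > 0$. The plan is to bound the norm of $T(Z)$ pointwise by an affine function of $\|Z\|$, and then transfer the exponential moment.

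\textbf{Step 1 (pointwise bound).} Let $L$ be the Lipschitz constant of $T$. Applying the Lipschitz inequality at a fixed reference point, say the origin, gives
\begin{equation*}
\|T(Z)\| \le \|T(0)\| + \|T(Z) - T(0)\| \le \|T(0)\| + L\|Z\|
\end{equation*}
almost surely. Only finiteness of $T(0)$ is used here, and this is automatic since $T$ is a function on $\R^d$.

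\textbf{Step 2 (transfer the exponential moment).} Choose $\lambda > 0$ with $\E[e^{\lambda\|Z\|}] < \infty$, and set $\lambda' = \lambda / L$. If $L = 0$, then $T$ is constant and the claim is trivial. By Step 1 and monotonicity of the exponential,
\begin{equation*}
\E\bigl[e^{\lambda'\|T(Z)\|}\bigr] \le e^{\lambda'\|T(0)\|}\, \E\bigl[e^{\lambda\|Z\|}\bigr] < \infty .
\end{equation*}
Hence $T(Z)$ is light-tailed.

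Equivalently, in tail form, for $t > \|T(0)\|$ we have
\begin{equation*}
\P(\|T(Z)\| > t) \le \P\bigl(\|Z\| > (t - \|T(0)\|)/L\bigr),
\end{equation*}
and the right-hand side still decays exponentially, now with rate $c/L$.

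\textbf{Main obstacle.} The only delicate point is agreeing on the definition of light-tailedness, since the paper defines heaviness coordinatewise via $\E[e^{\lambda X}]$. For vector-valued $Z$, we will first show that coordinatewise light tails in both directions imply a finite exponential moment of $\|Z\|$. We use $\|z\| \le \sum_i |z_i|$ and Hölder's inequality, which give $\E[e^{\lambda\|Z\|}] \le \prod_i \E[e^{d\lambda |Z_i|}]^{1/d}$, together with $e^{|x|} \le e^x + e^{-x}$. We then run Steps 1--2 in norm and, if required, return to coordinates via $|T_i(Z)| \le \|T(Z)\|$.

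If instead the intended notion is the finer one of \citet{jaini2020tails}, namely a given tail exponent, the same affine bound shows that the tail index can only improve up to the factor $L$. In either formulation the argument reduces to the single affine inequality of Step 1.
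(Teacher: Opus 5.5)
The paper gives no proof of this statement; it cites \citet{jaini2020tails}. Your argument is correct and is the standard proof of this fact. You bound $\|T(z)\| \le \|T(0)\| + L\|z\|$ and then transfer the exponential moment, or equivalently the tail bound, from $\|Z\|$ to $\|T(Z)\|$. Your H\"older reduction from coordinatewise two-sided light tails to a finite exponential moment of $\|Z\|$ also works, provided you choose $\lambda$ small enough that $d\lambda$ lies below every coordinate's admissible exponent. You leave that choice implicit, but it is routine.

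Two remarks.

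\textbf{The definition.} Your opening sentence attributes the norm-based definition $\E[e^{\lambda\|Z\|}]<\infty$ to the paper, but the paper's definition is one-sided: $\E[e^{\lambda X}] < \infty$ for some $\lambda > 0$. Under that literal reading the statement is false. Take $C$ standard Cauchy and $Z = -|C|$; then $\E[e^{\lambda Z}] \le 1$ for every $\lambda > 0$. The $1$-Lipschitz map $T(z) = -z$ gives $T(Z) = |C|$, which is heavy-tailed. So the two-sided (equivalently, norm-based) hypothesis you adopt in your ``main obstacle'' paragraph is forced, not merely convenient, and you should state this up front rather than as an aside.

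\textbf{The closing remark on tail exponents.} This is imprecise. If $\P(\|Z\| > t) = O(t^{-\alpha})$, the same affine bound gives $\P(\|T(Z)\| > t) \le \P\bigl(\|Z\| > (t - \|T(0)\|)/L\bigr) = O(t^{-\alpha})$. A polynomial tail index therefore cannot decrease at all, and $L$ only enters the constant. It is the exponential rate $c$ that can degrade, to $c/L$.
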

Neural networks composed of linear layers and standard activations (ReLU, tanh, sigmoid) are Lipschitz. Thus, standard normalizing flows cannot map Gaussian noise to heavy-tailed outputs.

\subsection{Prior Approaches}
\label{subsec:prior}

\paragraph{Heavy-tailed base distributions.} Tail-Adaptive Flows \citep[TAF;][]{jaini2020tails} use Student-$t$ base distributions. Extensions include marginal TAF \citep{laszkiewicz2022marginal} and generalized TAF. Heavy-tailed diffusion \citep{pandey2024generative} replaces Gaussian noise with Student-$t$. These methods address the Lipschitz barrier but can suffer training instability.

\paragraph{Tail-modifying transforms.} Tail Transform Flows \citep[TTF;][]{hickling2025tail} add explicit layers that map Gaussian tails to power-law tails using erfc-based transformations. This requires estimating tail parameters per dataset via the Hill estimator.

In the next section, we present an alternative: apply the log-transform to data \emph{before} training, work entirely in log-space, then exponentiate samples \emph{after} generation. This parameter-free approach requires no tail estimation and we show that it circumvents the Lipschitz barrier.


\section{Tail Annealing in Log-Space}
\label{sec:log_space_diffusion}

This section develops the theoretical foundation for log-space flow matching. The log-transform compresses heavy tails into a light-tailed regime where standard methods apply, and the induced dynamics in original space implement \emph{tail annealing}: power transformations that continuously adjust the tail index from heavy to light.

The principle of annealing complex structure through a sequence of easier problems is well-established in machine learning. Noise annealing in score-based models \citep{song2019generative} gradually reduces noise levels to enable learning across scales. Curriculum learning \citep{bengio2009curriculum} trains on easy examples before hard ones. Our tail annealing follows the same principle: rather than directly modeling heavy-tailed data (hard), we work in log-space where tails are light (easy), and the power transformation $X_0^{\alpha_t}$ provides a principled path through intermediate tail weights. Section~\ref{sec:algorithm} presents the complete algorithm.

\subsection{The Soft-Log Transform}
\label{subsec:log_transform}

We work with the \emph{soft-log} transform $\phi : \R \to \R$ defined componentwise by:
\begin{align*}
  \phi(x) = \mathrm{sign}(x) &\cdot \log(1 + |x|);\\
    &\phi^{-1}(y) = \mathrm{sign}(y) \cdot (e^{|y|} - 1).
\end{align*}
Unlike the standard logarithm, $\phi$ is smooth at the origin ($\phi'(0) = 1$) and defined on all of $\R$. For large $|x|$, we have $\phi(x) \approx \mathrm{sign}(x) \log |x|$, so tail behavior matches the logarithm.

\paragraph{Relation to Box-Cox.} Transforming heavy-tailed data through a log transform is a known trick in the statistical litterature. The soft-log is inspired by, but distinct from, the Box-Cox family \citep{box1964analysis}. Recall Box-Cox: $\phi_\lambda(x) = (x^\lambda - 1)/\lambda$ for $\lambda \neq 0$, and $\phi_0(x) = \log x$ for $x > 0$. Our transform differs in two ways: it extends to all of $\R$ via the signed construction, and uses $\log(1 + |x|)$ rather than $\log |x|$ for smoothness at zero. Asymptotically, both behave as $\log |x|$ for large $|x|$. Methods like TTF \citep{hickling2025tail} use power transforms with $\lambda = 1/\nu$ estimated via the Hill estimator. In our approach, numerical computation make no use of numerical values of the tail estimate, which are notoriously unstable: the same transform applies regardless of the true tail index.

The soft-log maps heavy tails to light tails. If $X \sim \Pareto(\gamma)$ with $\P(X > t) = t^{-1/\gamma}$ (so $\gamma > 0$ is the shape parameter and $1/\gamma$ is the tail index), then for large $y$:
\begin{equation*}
    \P(\phi(X) > y) = \P(X > e^y - 1) \approx e^{-y/\gamma},
\end{equation*}
i.e., $\tilde{X} = \phi(X)$ has exponential-type tails. More generally, for any distribution in the Fr\'{e}chet domain of attraction, $\phi$ yields a distribution with exponential or lighter tails.

Since $\phi$ is a diffeomorphism, densities transform via the standard change of variables:
\begin{equation*}
    p_{\tilde{X}}(\tilde{x}) = p_X(\phi^{-1}(\tilde{x})) \cdot |(\phi^{-1})'(\tilde{x})| = p_X(\phi^{-1}(\tilde{x})) \cdot e^{|\tilde{x}|}.
\end{equation*}

\subsection{Induced Process in Original Space}
\label{subsec:forward_log}

We apply flow matching to the transformed data $\tilde{X}_0 = \phi(X_0)$, interpolating with Gaussian noise $\tilde{X}_1 \sim \mathcal{N}(0, I_d)$ via $\tilde{X}_t = \alpha_t \tilde{X}_0 + \beta_t \tilde{X}_1$ (see Section~\ref{sec:algorithm} for the complete framework). Here we analyze the induced process in original space.

Applying $\phi^{-1}$ to the interpolant yields, for large positive values where $\phi \approx \log$:
\begin{equation}
  X_t := \phi^{-1}(\tilde{X}_t) \approx X_0^{\alpha_t} \cdot e^{\beta_t \tilde{X}_1}.
    \label{eq:induced_original}
\end{equation}
This factorizes into a power-transformed data term $X_0^{\alpha_t}$ and a log-normal noise term $e^{\beta_t \tilde{X}_1}$. Although $e^{\beta_t \tilde{X}_1}$ is itself heavy-tailed in the MGF sense, it has extreme-value index $\gamma = 0$ (Gumbel MDA): $\P(Y_t > t) \sim e^{-c(\log t)^2}$ decays faster than any power law and all moments are finite. By Breiman's lemma it does not contribute a polynomial tail of its own to the product:

\begin{proposition}[Product Preserves Regular Variation; Breiman]
\label{prop:product_tails}
Let $X$ be regularly varying with index $-\alpha$ ($\alpha > 0$), and let $Y > 0$ be independent with $\E[Y^p] < \infty$ for all $p > 0$ (e.g., any variable in the Gumbel MDA, such as log-normal). Then $XY$ is regularly varying with index $-\alpha$.
\end{proposition}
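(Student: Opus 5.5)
We prove Breiman's lemma directly: condition on $Y$, then use dominated convergence, with Potter's bounds supplying the dominating function. Write $\bar F(t) = \P(X > t)$, regularly varying with index $-\alpha$. By independence,
\begin{equation*}
\frac{\P(XY > t)}{\bar F(t)} = \E\!\left[\frac{\bar F(t/Y)}{\bar F(t)}\right].
\end{equation*}
For each fixed $y > 0$, regular variation gives $\bar F(t/y)/\bar F(t) \to y^{\alpha}$ as $t\to\infty$. So if we can pass the limit inside the expectation, we get $\P(XY>t) \sim \E[Y^\alpha]\,\bar F(t)$. Here $\E[Y^\alpha] < \infty$ by assumption and $\E[Y^\alpha] > 0$ since $Y > 0$. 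The product of a regularly varying function with a positive constant is regularly varying with the same index, so $XY$ has index $-\alpha$.

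The main obstacle is justifying the interchange of limit and expectation. The difficulty is uniform control of $\bar F(t/y)/\bar F(t)$ over all $y$, including $y$ of order $t$, where $t/y$ is no longer large. Fix $\epsilon > 0$ and split the expectation into three regions.

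\textbf{Region $\{Y \le 1\}$.} Monotonicity of $\bar F$ gives $\bar F(t/Y) \le \bar F(t)$, so the ratio is at most $1$, which is integrable.

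\textbf{Region $\{1 < Y \le t/x_0\}$.} Here $t/Y \ge x_0$, and for $x_0$ large Potter's bounds give
\begin{equation*}
\frac{\bar F(t/Y)}{\bar F(t)} \le C\, Y^{\alpha+\epsilon}.
\end{equation*}
This is integrable because all moments of $Y$ are finite.

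\textbf{Region $\{Y > t/x_0\}$.} Bound the probability crudely by
\begin{equation*}
\frac{\P(Y > t/x_0)}{\bar F(t)} \le \frac{\E[Y^{p}]\, x_0^{p}\, t^{-p}}{\bar F(t)}
\end{equation*}
by Markov's inequality. Choose $p > \alpha$. Since $\bar F(t) \ge t^{-\alpha-\epsilon}$ for large $t$ (again by Potter), this term tends to $0$.

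With these bounds, dominated convergence applies to the integrand restricted to $\{Y \le t/x_0\}$. That restricted integrand converges pointwise to $Y^\alpha$, because the indicator of $\{Y \le t/x_0\}$ tends to $1$ almost surely. The third-region term vanishes, so the limit equals $\E[Y^\alpha]$.

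This argument uses only $\E[Y^{\alpha+\epsilon}] < \infty$ and some moment of order $p > \alpha$. The hypothesis that all moments of $Y$ are finite is therefore more than enough. In particular it covers the log-normal factor $e^{\beta_t \tilde X_1}$ in \eqref{eq:induced_original}, all of whose moments $e^{p^2\beta_t^2/2}$ are finite.
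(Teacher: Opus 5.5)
Your proof is correct. The paper gives no argument of its own here: it labels the statement Breiman's lemma and defers to \citet{resnick1987}. You instead supply the standard self-contained proof. You condition on $Y$ to get $\P(XY>t)/\bar F(t)=\E[\bar F(t/Y)/\bar F(t)]$. You then dominate the integrand on $\{Y\le t/x_0\}$ by $1+AY^{\alpha+\epsilon}$, using monotonicity and Potter's bounds, and you show the region $\{Y>t/x_0\}$ is negligible via Markov.

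Compared with the bare citation, your argument makes two things explicit. First, it gives the sharper asymptotic $\P(XY>t)\sim\E[Y^\alpha]\,\P(X>t)$, so the tail constant is identified and not just the index. Second, it shows that $\E[Y^{p}]<\infty$ for some $p>\alpha$ already suffices, so the paper's ``all moments finite'' hypothesis is much stronger than needed. It also works verbatim for real-valued $X$, since only the right tail enters and $Y>0$. That matches the paper's definition of regular variation through $\bar F$ on $\R$. The citation's only advantage is brevity.

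There is one small bookkeeping slip in the third region. You choose $p>\alpha$ but then use $\bar F(t)\ge t^{-\alpha-\epsilon}$ with the same $\epsilon$ as in the Potter step. That gives decay only if $p>\alpha+\epsilon$. Either take $p>\alpha+\epsilon$, which costs nothing since all moments are finite, or use $\bar F(t)\ge t^{-\alpha-\epsilon'}$ with $\epsilon'<p-\alpha$, which holds because $t^{\epsilon'}L(t)\to\infty$.

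If you also want to justify the statement's parenthetical about the Gumbel MDA, you need one more fact. A distribution in the Gumbel domain with infinite right endpoint has a rapidly varying tail, hence finite moments of all orders; a finite endpoint is trivial. Your log-normal moment computation covers only one instance of this.
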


This classical result (see \citet{resnick1987}) ensures that the induced process $X_t$ inherits its polynomial-tail behavior from $X_0^{\alpha_t}$, not from $e^{\beta_t \tilde{X}_1}$ (whose extreme-value index is $0$). The power transformation is therefore what drives tail annealing.

\subsection{Score Functions in Log-Space}
\label{subsec:score_log}

The score function $\nabla_{\tilde{x}} \log p_t(\tilde{x})$ in log-space has fundamentally different behavior than in original space.

\begin{proposition}[Log-Space Score of Pareto]
\label{prop:log_score}
Let $X \sim \Pareto(\gamma)$ (so $\bar{F}_X(t) = t^{-1/\gamma}$). Then $\tilde{X} = \phi(X)$ has approximately exponential tails: for large $\tilde{x}$,
\begin{equation*}
    \nabla_{\tilde{x}} \log p_{\tilde{X}}(\tilde{x}) \approx -\frac{1}{\gamma}.
\end{equation*}
The score is approximately constant in the tails.
\end{proposition}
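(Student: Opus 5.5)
The plan is to compute the density of $\tilde X = \phi(X)$ in closed form via the change-of-variables formula from Section~\ref{subsec:log_transform}, then differentiate its logarithm. Since $X\sim\Pareto(\gamma)$ is supported on $[1,\infty)$, we have $\tilde X\ge \log 2>0$. On this range $\phi(x)=\log(1+x)$, so $\phi^{-1}(\tilde x)=e^{\tilde x}-1$. The Pareto density is $p_X(x)=\frac{1}{\gamma}x^{-1/\gamma-1}$ for $x\ge1$. Using $|(\phi^{-1})'(\tilde x)|=e^{\tilde x}$, this gives, for $\tilde x>\log 2$,
\begin{equation*}
p_{\tilde X}(\tilde x)=\frac{1}{\gamma}\,(e^{\tilde x}-1)^{-1/\gamma-1}\,e^{\tilde x}.
\end{equation*}

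Next I will take logarithms:
\begin{equation*}
\log p_{\tilde X}(\tilde x)=-\log\gamma-\Big(\tfrac1\gamma+1\Big)\log(e^{\tilde x}-1)+\tilde x.
\end{equation*}
Differentiating in $\tilde x$ gives the score exactly on the interior of the support:
\begin{equation*}
\nabla_{\tilde x}\log p_{\tilde X}(\tilde x)=1-\Big(\tfrac1\gamma+1\Big)\frac{e^{\tilde x}}{e^{\tilde x}-1}
=-\frac1\gamma-\Big(\tfrac1\gamma+1\Big)\frac{1}{e^{\tilde x}-1}.
\end{equation*}
The correction term is $O(e^{-\tilde x})$, so the score converges to $-1/\gamma$ exponentially fast as $\tilde x\to\infty$. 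This makes the "$\approx$" in the statement precise, and it also shows the score is approximately constant in the tails.

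As a cross-check, I will also compute the survival function: $\P(\tilde X>y)=(e^y-1)^{-1/\gamma}\sim e^{-y/\gamma}$. This agrees with the tail computation preceding the statement and identifies the limiting law as exponential with rate $1/\gamma$, whose score is identically $-1/\gamma$. There is no real obstacle. The only points needing care are bookkeeping: $\phi$ reduces to $\log(1+x)$ on the support, the boundary point $\tilde x=\log 2$ is excluded from the differentiation, and the interpretation of "large $\tilde x$" is made quantitative via the explicit $O(e^{-\tilde x})$ remainder. If one instead uses the exact logarithm, $\log X\sim\Exp(1/\gamma)$ and the score equals $-1/\gamma$ exactly, which I will note as a remark.
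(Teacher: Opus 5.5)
Your proof is correct, but it takes a different and more rigorous route than the paper. The paper argues in one line: for large $x$, $\phi(x)\approx\log x$, so $\tilde X\approx\log X\sim\Exp(1/\gamma)$, and the exponential log-density $-\tilde x/\gamma$ has score exactly $-1/\gamma$. That argument carries the exact identity for $\log X$ over to $\phi(X)$. Closeness of the random variables $\phi(X)$ and $\log X$ does not by itself control the derivative of the log-density, so the paper's ``$\approx$'' is heuristic.

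You instead compute the density of $\phi(X)$ exactly by change of variables and differentiate. This gives the closed form $-\frac{1}{\gamma}-\bigl(\frac{1}{\gamma}+1\bigr)(e^{\tilde x}-1)^{-1}$, which makes the approximation precise with an explicit $O(e^{-\tilde x})$ remainder. Your formula also yields a global statement the heuristic cannot. On the support $\tilde x>\log 2$ the score lies in $\bigl(-\frac{2}{\gamma}-1,\,-\frac{1}{\gamma}\bigr)$. So it is bounded, but by $\frac{2}{\gamma}+1$, not by $\frac{1}{\gamma}$ as the paragraph after the proposition claims.

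The paper's route buys brevity and the conceptual link to the exponential family with sufficient statistic $\log x$. Yours relies on the exact Pareto density, so it does not extend directly to general regularly varying laws, but it is fully quantitative. Your closing remark about the exact logarithm is precisely the paper's argument.
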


\begin{proof}
For large $x$, $\phi(x) \approx \log x$, so $\tilde{X} \approx \log X \sim \Exp(1/\gamma)$. The exponential density is $p(\tilde{x}) \propto e^{-\tilde{x}/\gamma}$, giving $\nabla_{\tilde{x}} \log p(\tilde{x}) = -1/\gamma$.
\end{proof}

\paragraph{Pareto score.}
By the \emph{Pareto score} we mean the score of the Pareto density $p_X(x) = (1/\gamma)\,x^{-1/\gamma - 1}$ in original space:
\begin{equation*}
    s_X(x) = \nabla_x \log p_X(x) = -\frac{1/\gamma+1}{x},
\end{equation*}
which diverges as $x \to 0^+$ and decays only as $1/x$ in the tails. The log-space score in Proposition~\ref{prop:log_score} is by contrast bounded by $1/\gamma$ everywhere, so the velocity field that flow matching has to regress is Lipschitz in $\tilde{x}$. This score boundedness is what enables the standard MLP velocity network to learn a target with regularly varying tails in the original space.

\begin{proposition}[Score of Noised Log-Data]
\label{prop:noised_score}
For the interpolant $\tilde{X}_t = \alpha_t \tilde{X}_0 + \beta_t \tilde{X}_1$ with $\tilde{X}_0 = \phi(X_0)$ having exponential-type tails and $\tilde{X}_1 \sim \mathcal{N}(0, I_d)$, the marginal score satisfies:
\begin{equation*}
    \nabla_{\tilde{x}_t} \log p_t(\tilde{x}_t) = -\frac{\hat{\tilde{x}}_1(\tilde{x}_t, t)}{\beta_t},
\end{equation*}
where $\hat{\tilde{x}}_1(\tilde{x}_t, t) := \E[\tilde{X}_1 \mid \tilde{X}_t = \tilde{x}_t]$ is the conditional expectation of the noise.
\end{proposition}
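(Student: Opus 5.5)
The identity is Tweedie's formula written in noise-prediction form, so the plan is to differentiate the Gaussian convolution density directly. The hypotheses on $\tilde X_0$ serve only to justify the interchanges.

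Fix $t$ with $\beta_t>0$ and write $\mu$ for the law of $\tilde X_0$. Since $\tilde X_1\sim\mathcal N(0,I_d)$ is independent of $\tilde X_0$, conditioning on $\tilde X_0=y$ shows that $\tilde X_t$ has density
\begin{equation*}
p_t(x)=\int (2\pi\beta_t^2)^{-d/2}\exp\!\Big(-\frac{\|x-\alpha_t y\|^2}{2\beta_t^2}\Big)\,\mu(dy).
\end{equation*}
This density is smooth and strictly positive on $\R^d$.

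The first step is to differentiate under the integral sign. The derivative of the Gaussian kernel in $x$ is $-(x-\alpha_t y)/\beta_t^2$ times the kernel. For $x$ in a compact set, this is dominated by $C(1+\|y\|)$, which is $\mu$-integrable. Here the exponential-type tails of $\tilde X_0=\phi(X_0)$ are used: they guarantee all moments, and indeed far less than that is needed. Dividing by $p_t(x)$ gives
\begin{equation*}
\nabla\log p_t(x)=-\frac{1}{\beta_t^2}\,\frac{\int (x-\alpha_t y)\,k_t(x,y)\,\mu(dy)}{\int k_t(x,y)\,\mu(dy)}=-\frac{x-\alpha_t\,\E[\tilde X_0\mid \tilde X_t=x]}{\beta_t^2},
\end{equation*}
where $k_t(x,y)$ denotes the Gaussian kernel above. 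The middle ratio is the conditional expectation by Bayes' rule: the posterior of $\tilde X_0$ given $\tilde X_t=x$ has density proportional to $k_t(x,y)$ with respect to $\mu$.

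The second step is a substitution. Taking conditional expectations in the interpolant $\tilde X_t=\alpha_t\tilde X_0+\beta_t\tilde X_1$ gives
\begin{equation*}
x=\alpha_t\,\E[\tilde X_0\mid\tilde X_t=x]+\beta_t\,\hat{\tilde x}_1(x,t).
\end{equation*}
Hence $x-\alpha_t\,\E[\tilde X_0\mid \tilde X_t=x]=\beta_t\,\hat{\tilde x}_1(x,t)$, and substituting yields $\nabla\log p_t(x)=-\hat{\tilde x}_1(x,t)/\beta_t$, as claimed.

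The only real obstacle is the rigorous justification of the differentiation under the integral and of the Bayes identification. The latter holds for $\mu$-almost every version of the conditional expectation, and the formula picks out the continuous version. Both follow from the integrability of $\|\tilde X_0\|$, so the proof is essentially a careful statement of Tweedie's lemma.

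Two caveats should be recorded. First, the identity holds only for $t$ with $\beta_t>0$. At $t=0$ the marginal is the law of $\tilde X_0$ itself, where the formula is not meaningful. Second, the exponential-tail assumption is stronger than necessary; a finite first moment of $\tilde X_0$ suffices.
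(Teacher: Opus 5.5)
Your proof is correct and follows the paper's argument: the paper uses Fisher's identity to write $\nabla \log p_t$ as the posterior mean of the Gaussian kernel score $(\alpha_t x_0 - x_t)/\beta_t^2$, then substitutes $\hat{\tilde{x}}_1 = (\tilde{x}_t - \alpha_t \hat{\tilde{x}}_0)/\beta_t$, which are exactly your two steps, and your dominated-convergence argument supplies the ``standard regularity assumptions'' the paper leaves implicit. Two small refinements: the interchange needs no moment assumption at all, since $\|x-\alpha_t y\|\,k_t(x,y)$ is bounded uniformly in $(x,y)$ for $\beta_t>0$; and the versions of the conditional expectations agree off $p_t$-null (equivalently Lebesgue-null) sets in $x$, not $\mu$-null sets, while integrability of $\tilde X_0$ is only needed to make $\E[\tilde X_0 \mid \tilde X_t]$ a genuine conditional expectation.
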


This is the standard score-denoiser relationship. The key difference from original-space diffusion is that the transformed data distribution $p_{\tilde{X}_0}$ is light-tailed, so:
\begin{enumerate}
    \item The score $\nabla \log p_t$ is well-behaved for all $t$
    \item The denoiser $\hat{\tilde{x}}_1(\tilde{x}_t, t)$ has bounded outputs
    \item Standard neural network architectures suffice
\end{enumerate}

\subsection{Tail Behavior Under Power Transformation}
\label{subsec:tail_power}

The induced process \eqref{eq:induced_original} involves the power-transformed data $X_0^{\alpha_t}$. We characterize its distribution. Note that this analysis describes the \emph{decoded} interpretation: in practice, all computation occurs in log-space where distributions are light-tailed. The heavy-tailed structure below is what we would observe if we applied $\phi^{-1}$ to the log-space interpolant at each $t$.

\begin{lemma}[Power Transformation Preserves Pareto Family]
\label{thm:power}
Let $X_0 \sim \Pareto(\gamma)$ with density $p_{X_0}(x) = (1/\gamma)\, x^{-1/\gamma-1}$ for $x \geq 1$, so $\gamma > 0$ is the shape parameter (larger $\gamma$ means heavier tails). For any $\alpha \in (0, \infty)$, $X_0^{\alpha} \sim \Pareto(\gamma\alpha)$: the shape parameter scales linearly with the exponent.
\end{lemma}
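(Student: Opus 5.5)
The plan is to compute the survival function of $Y := X_0^{\alpha}$ directly and recognize it as a Pareto survival function with shape $\gamma\alpha$. This avoids the density change of variables, although that route also works and I would use it as a cross-check.

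\emph{Support.} Since $\alpha > 0$, the map $x \mapsto x^{\alpha}$ is a strictly increasing bijection of $[1,\infty)$ onto itself. Hence $Y \geq 1$ almost surely, and the event $\{Y > t\}$ equals $\{X_0 > t^{1/\alpha}\}$ for every $t \geq 1$.

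\emph{Survival function.} For $t \geq 1$ we have $t^{1/\alpha} \geq 1$, so the Pareto survival function $\bar F_{X_0}(s) = s^{-1/\gamma}$ applies:
\begin{equation*}
\P(Y > t) = \P\bigl(X_0 > t^{1/\alpha}\bigr) = \bigl(t^{1/\alpha}\bigr)^{-1/\gamma} = t^{-1/(\gamma\alpha)}.
\end{equation*}
This is exactly the survival function of $\Pareto(\gamma\alpha)$ on $[1,\infty)$. Since a survival function determines the law, $Y \sim \Pareto(\gamma\alpha)$. Differentiating gives the density $(1/(\gamma\alpha))\, t^{-1/(\gamma\alpha) - 1}$, which matches the density form used in the statement.

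\emph{Cross-check.} Alternatively, apply the change of variables with $x = y^{1/\alpha}$ and $dx/dy = (1/\alpha) y^{1/\alpha - 1}$:
\begin{equation*}
p_Y(y) = \frac{1}{\gamma}\, y^{-(1/\gamma+1)/\alpha} \cdot \frac{1}{\alpha}\, y^{1/\alpha-1} = \frac{1}{\gamma\alpha}\, y^{-1/(\gamma\alpha)-1}.
\end{equation*}
A third route uses the log representation from Section~\ref{subsec:heavy_tails}: $\log Y = \alpha \log X_0$, and $\log X_0 \sim \Exp(1/\gamma)$, so $\log Y \sim \Exp(1/(\gamma\alpha))$. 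Exponentiating back gives the same conclusion, and this route makes the ``linear scaling of the shape'' transparent.

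There is no real obstacle. The only point needing care is that $\alpha > 0$ is used twice: once for monotonicity, so that inequalities are preserved, and once so that the support $[1,\infty)$ is preserved. For $\alpha < 0$ the map would reverse the inequalities and send the support to $(0,1]$, which is why the statement restricts to $\alpha \in (0,\infty)$.
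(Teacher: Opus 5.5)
Your proof is correct and follows the paper's own argument: it computes $\P(X_0^{\alpha} > y) = \P(X_0 > y^{1/\alpha}) = y^{-1/(\gamma\alpha)}$ for $y \geq 1$ and recognizes this as the $\Pareto(\gamma\alpha)$ survival function. Your density and log-exponential cross-checks are correct but not needed, and your remark that $\alpha > 0$ preserves both monotonicity and the support $[1,\infty)$ spells out a point the paper leaves implicit.
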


\begin{proof}
The survival function of $Y = X_0^{\alpha}$ is $\P(Y > y) = \P(X_0 > y^{1/\alpha}) = y^{-1/(\gamma\alpha)}$ for $y \geq 1$, which is the survival function of $\Pareto(\gamma\alpha)$.
\end{proof}

\begin{remark}[Parametrization]
\label{rem:pareto_param}
We use $\gamma$ for the GPD shape parameter / extreme-value index, so $\P(X > t) = t^{-1/\gamma}$. The corresponding tail index (Hill estimator output, the standard EVT parameter) is $1/\gamma$. Under Lemma~\ref{thm:power} the power transform $X_0^\alpha$ has shape parameter $\gamma\alpha$ and tail index $1/(\gamma\alpha)$.
\end{remark}

\begin{corollary}[Tail Lightening Along Forward Process]
\label{cor:tail_lighten}
Along the forward process with $\alpha_t$ decreasing from $1$ to $0$:
\begin{itemize}
  \item At $t = 0$: $\alpha_0=1$; $X_0^{\alpha_0} = X_0 \sim \Pareto(\gamma)$ (original heavy tails)
    \item At intermediate $t$: $X_0^{\alpha_t} \sim \Pareto(\gamma\alpha_t)$ (lighter tails as $\alpha_t \to 0$)
    \item As $\alpha_t \to 0$: $X_0^{\alpha_t} \to 1$ almost surely (degenerate)
\end{itemize}
The full process $X_t = X_0^{\alpha_t} \cdot e^{\beta_t \tilde{X}_1}$ transitions from regularly-varying Pareto (shape $\gamma$, tail index $1/\gamma$) to log-normal (extreme-value index $0$, no polynomial tail).
\end{corollary}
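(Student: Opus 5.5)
The plan is to read each bullet off Lemma~\ref{thm:power}, and then use Proposition~\ref{prop:product_tails} together with a direct computation to describe the full process $X_t$.

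\textbf{Step 1: the three bullets.} At $t=0$ we have $\alpha_0=1$, so $X_0^{\alpha_0}=X_0\sim\Pareto(\gamma)$ trivially. For intermediate $t$ with $\alpha_t\in(0,1)$, Lemma~\ref{thm:power} with $\alpha=\alpha_t$ gives $X_0^{\alpha_t}\sim\Pareto(\gamma\alpha_t)$. Since $\gamma\alpha_t$ decreases as $\alpha_t$ decreases, the tails get lighter. Concretely, $\P(X_0^{\alpha_t}>y)=y^{-1/(\gamma\alpha_t)}$ decreases pointwise in $\alpha_t$ for each $y>1$. For the degenerate limit, note that $X_0\ge 1$ is finite almost surely, so $\log X_0<\infty$ a.s. Hence $X_0^{\alpha_t}=\exp(\alpha_t\log X_0)\to e^0=1$ pointwise on that event as $\alpha_t\to0$.

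\textbf{Step 2: the full process.} For $t$ with $\alpha_t>0$, write $X_t=X_0^{\alpha_t}\,Y_t$ with $Y_t=e^{\beta_t\tilde X_1}$. Here $Y_t$ is log-normal, so $\E[Y_t^p]=e^{p^2\beta_t^2/2}<\infty$ for all $p$. Since $X_0$ and $\tilde X_1$ are independent, Proposition~\ref{prop:product_tails} shows that $X_t$ is regularly varying with index $-1/(\gamma\alpha_t)$. This interpolates continuously from tail index $1/\gamma$ at $t=0$.

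At the endpoint, $\alpha_1=0$ and $\beta_1=1$, so $X_1=e^{\tilde X_1}$ is exactly log-normal. Its extreme-value index is $0$, because $\P(X_1>s)=\P(\tilde X_1>\log s)\sim e^{-(\log s)^2/2}/\log s$, which is $o(s^{-p})$ for every $p$. The transition to the endpoint can also be seen directly: $\log X_t=\alpha_t\log X_0+\beta_t\tilde X_1\to\tilde X_1$ almost surely, so $X_t\to X_1$ almost surely and in distribution.

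\textbf{Main obstacle.} The delicate point is that the statement's formula for $X_t$ is only the approximation \eqref{eq:induced_original}, valid where $\phi\approx\log$. With the soft-log, the exact process is $\phi^{-1}(\alpha_t\phi(X_0)+\beta_t\tilde X_1)$, not a product. I would prove the corollary for the idealized product form, treating that form as the definition, as the statement does. I would then add a remark on the exact process. Since $\phi(x)=\log x+O(1/x)$ on $x\ge1$, $\phi^{-1}$ differs from $\exp$ by at most a bounded factor plus a sign and shift. So the exact $X_t$ equals the product up to a factor bounded above and below plus a bounded shift on the positive tail. Both operations preserve regular variation and its index. This should give the same tail index $1/(\gamma\alpha_t)$, but I would check this step most carefully.

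I would also note that "regularly varying to log-normal" is a limit statement. The tail index $1/(\gamma\alpha_t)\to\infty$ is consistent with the limit having no polynomial tail, but regular variation is not continuous under weak limits. So the claim should be phrased as convergence in distribution together with the index blowing up, rather than as a uniform statement about tails.
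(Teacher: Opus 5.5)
Your proposal is correct and follows the paper's own (implicit) justification. The bullets are read off Lemma~\ref{thm:power}, and the statement about the full product process rests on Breiman's lemma (Proposition~\ref{prop:product_tails}) applied with the log-normal factor $e^{\beta_t \tilde X_1}$, exactly as in Section~\ref{subsec:forward_log}; the paper gives no separate proof beyond this.

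One correction to your optional remark on the exact soft-log process: a random factor that is merely bounded above and below, and depends on $X_0$, does not preserve regular variation in general. You do not need that claim, though. The identity $e^{\alpha_t\phi(X_0)} = (1+X_0)^{\alpha_t}$ gives, for $s>0$, exactly $\P(X_t > s) = \P\bigl((1+X_0)^{\alpha_t} e^{\beta_t \tilde X_1} > 1+s\bigr)$, and Propositions~\ref{prop:rv_power} and~\ref{prop:product_tails} then give index $-1/(\gamma\alpha_t)$ directly.
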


This is what we frame as the tail annealing mechanism: the power transformation $X_0^{\alpha_t}$ preserves the Pareto family while continuously adjusting the shape parameter from $\gamma$ (heavy) toward $0$ (light) as $\alpha_t \to 0$. We state Lemma~\ref{thm:power} for $\alpha \in (0, \infty)$ rather than the forward-process range $\alpha_t \in (0, 1]$, because the same identity appears in Proposition~\ref{prop:rv_power} for the regularly varying extension.

\subsection{Extension to Regularly Varying Distributions}
\label{subsec:regular_variation}

The Pareto-exponential correspondence extends to the broader class of regularly varying distributions.

\begin{definition}[Regularly Varying]
\label{def:regular_varying}
A cumulative distribution function $F$ on $\R$ is regularly varying with index $-\alpha$ (written $F \in RV_{-\alpha}$) if its survival function $\bar{F}(x) = 1 - F(x)$ satisfies:
\begin{equation*}
    \lim_{t \to \infty} \frac{\bar{F}(tx)}{\bar{F}(t)} = x^{-\alpha}
\end{equation*}
for all $x > 0$. Here $\alpha > 0$ is the (polynomial) tail index, in the standard EVT convention; for $X \sim \Pareto(\gamma)$, $\alpha = 1/\gamma$.
\end{definition}

Regular variation captures the essential property of polynomial tail decay without requiring the exact Pareto form:
\begin{center}
\small
\begin{tabular}{@{}lll@{}}
\toprule
Distribution & Tail behavior & Index \\
\midrule
Pareto$(\gamma)$ & $\bar{F}(x) = x^{-1/\gamma}$ & $-1/\gamma$ \\
Student-$t(\nu)$ & $\bar{F}(x) \sim c \cdot x^{-\nu}$ & $-\nu$ \\
Burr$(c,k)$ & $\bar{F}(x) \sim c' \cdot x^{-ck}$ & $-ck$ \\
Log-gamma & $\bar{F}(x) \sim c'' \cdot x^{-\alpha}(\log x)^{\beta}$ & $-\alpha$ \\
\bottomrule
\end{tabular}
\end{center}

\begin{proposition}[Log-Transform of Regularly Varying; informal]
\label{prop:rv_log}
If $X$ is regularly varying with index $-\alpha$, then $\tilde{X} = \phi(X)$ has exponential-type right tail with rate $\alpha$:
\begin{equation*}
    -\log \P(\tilde{X} > z) \;=\; \alpha\, z + o(z) \quad \text{as } z \to \infty.
\end{equation*}
\end{proposition}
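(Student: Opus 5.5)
The plan is to reduce the statement to the standard fact that a regularly varying survival function is a power times a slowly varying function, and then show that the slowly varying factor contributes only $o(z)$ after taking logarithms. By Definition~\ref{def:regular_varying}, $\bar{F}(x) = x^{-\alpha} L(x)$ with $L$ slowly varying, i.e.\ $L(tx)/L(t) \to 1$ for every $x>0$. For $z>0$ the map $\phi$ is strictly increasing on $(0,\infty)$ with $\phi^{-1}(z) = e^z - 1$. Hence
\begin{equation*}
\P(\tilde{X} > z) = \P(X > e^z - 1) = \bar{F}(e^z - 1),
\end{equation*}
and therefore
\begin{equation*}
-\log \P(\tilde{X} > z) = \alpha \log(e^z - 1) - \log L(e^z - 1).
\end{equation*}
For $z$ large we have $\tilde{X} > z > 0$ exactly when $X > e^z - 1$, since negative $X$ map to negative $\tilde{X}$. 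So no issue arises from the signed construction or from the left tail.

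The first term is routine: $\log(e^z - 1) = z + \log(1 - e^{-z}) = z + O(e^{-z})$. It therefore contributes $\alpha z + o(z)$.

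The key step is the slowly varying term. I would invoke the classical consequence of the Uniform Convergence / Karamata representation theorem (\citet{resnick1987}): for any slowly varying $L$,
\begin{equation*}
\frac{\log L(x)}{\log x} \to 0 \quad \text{as } x \to \infty.
\end{equation*}
Equivalently, $x^{-\varepsilon} \le L(x) \le x^{\varepsilon}$ for every $\varepsilon > 0$ and $x$ large. Substituting $x = e^z - 1$, for which $\log x \sim z$, gives $\log L(e^z - 1) = o(z)$, and the two pieces combine to the claim.

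If I wanted to avoid citing Karamata, I would prove the needed bound directly from the definition. Fix $\varepsilon > 0$. Since $L(2t)/L(t) \to 1$, there is $t_0$ with $2^{-\varepsilon} \le L(2t)/L(t) \le 2^{\varepsilon}$ for $t \ge t_0$. Iterating along $t_0 2^k$ gives $|\log L(t_0 2^k)| \le k\varepsilon \log 2 + C$. The remaining task is to control $L$ between dyadic points, and this is the main obstacle. Measurability of $\bar{F}$ is what makes the uniform convergence theorem available here. Alternatively, one can simply use the monotonicity of $\bar{F}$: for $t_0 2^k \le x \le t_0 2^{k+1}$,
\begin{equation*}
\bar{F}(t_0 2^{k+1}) \le \bar{F}(x) \le \bar{F}(t_0 2^k).
\end{equation*}
So $-\log \bar{F}(x)$ is sandwiched between values at consecutive dyadic points. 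Those values equal $\alpha k \log 2 + O(k\varepsilon)$, which makes the argument elementary.

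Letting $\varepsilon \to 0$ yields
\begin{equation*}
-\log \bar{F}(x) = \alpha \log x + o(\log x),
\end{equation*}
which is exactly what is needed after substituting $x = e^z - 1$.
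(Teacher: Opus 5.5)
Your proposal is correct and follows essentially the same route as the paper's proof (Proposition~\ref{prop:rv_log_precise}): you write $\bar F(x) = x^{-\alpha}L(x)$ with $L$ slowly varying, substitute $x = e^z - 1$, and control the slowly varying factor with $x^{-\varepsilon} \le L(x) \le x^{\varepsilon}$. The paper first packages this as a two-sided exponential sandwich and then takes logarithms, while you take logarithms directly; beyond that, your optional dyadic argument (using only $L(2t)/L(t)\to 1$ plus monotonicity of $\bar F$) is a valid self-contained replacement for the cited Resnick/Potter bound, and your observation that $\tilde X > z \iff X > e^z - 1$ for $z>0$ covers signed $X$, which the paper's precise version avoids by assuming $X \ge 0$.
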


A precise two-sided Potter-bound version is stated and proved in Appendix~\ref{app:proofs} (Proposition~\ref{prop:rv_log_precise}). The intuition is direct: writing the regularly varying tail as $\bar F_X(x) = x^{-\alpha} L(x)$ with $L$ slowly varying, and using $\phi^{-1}(z) = e^z - 1 \sim e^z$, gives $\P(\tilde X > z) = e^{-\alpha z} L(e^z)$, and the slowly varying factor only contributes a subexponential correction $\log L(e^z) = o(z)$.

This proposition is the conceptual reason flow matching in log-space is well-posed for the full class of regularly varying distributions, not just exact Pareto. Up to slowly-varying corrections, $\tilde X$ has the same exponential rate $e^{-\alpha z}$ as $\log X$, which equals $e^{-z/\gamma}$ when $X \sim \Pareto(\gamma)$. The score in log-space is therefore bounded in the tails and standard diffusion methods apply.

\paragraph{Beyond regular variation.}
The transform is also beneficial for subexponential distributions outside the Fr\'{e}chet domain. For Weibull tails $\bar F(x) \sim \exp(-x^\beta)$ with $\beta < 1$, the log-transform yields a doubly-exponential tail $\P(\tilde X > z) \sim \exp(-(e^z - 1)^\beta)$, which is well within the regime where standard flow matching has no difficulty. For lognormal data, $\phi(X) \approx \log X$ is exactly Gaussian, so the log-space target is the ideal case for FM. Light-tailed margins are addressed by the adaptive variant of Section~\ref{subsec:adaptive}.

\begin{proposition}[Power Transformation of Regularly Varying]
\label{prop:rv_power}
If $X \in RV_{-\alpha}$, then $X^{\beta}$ for $\beta \in (0, \infty)$ is regularly varying with index $-\alpha/\beta$.
\end{proposition}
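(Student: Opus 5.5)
The plan is to work directly from Definition~\ref{def:regular_varying} and compute the survival function of $Y = X^{\beta}$, imitating the proof of Lemma~\ref{thm:power}. Since the map $x \mapsto x^{\beta}$ is only unambiguous for $x \ge 0$, I will interpret $X^\beta$ on the right tail, as a function of $X_+ = \max(X,0)$ (or as $\mathrm{sign}(X)|X|^\beta$). Regular variation concerns only $\bar F$ at $+\infty$, and for $y>0$ the event $\{Y > y\}$ coincides with $\{X > y^{1/\beta}\}$, so either convention gives the same tail. For every $y > 0$,
\begin{equation*}
  \bar F_Y(y) = \P(X > y^{1/\beta}) = \bar F_X(y^{1/\beta}),
\end{equation*}
because $x \mapsto x^{\beta}$ is strictly increasing on $(0,\infty)$.

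Next I check the defining limit. I fix $x > 0$ and let $t \to \infty$. Setting $s = t^{1/\beta}$, which tends to $\infty$ together with $t$ since $\beta > 0$, I get
\begin{equation*}
  \frac{\bar F_Y(tx)}{\bar F_Y(t)} = \frac{\bar F_X\bigl(s\, x^{1/\beta}\bigr)}{\bar F_X(s)} \longrightarrow \bigl(x^{1/\beta}\bigr)^{-\alpha} = x^{-\alpha/\beta}.
\end{equation*}
This uses the regular variation of $\bar F_X$ at the fixed point $x^{1/\beta} > 0$. The limit is exactly the definition of $F_Y \in RV_{-\alpha/\beta}$.

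The only points needing care are two technical ones. First, $\bar F_Y(t) > 0$ for all large $t$; this follows because a regularly varying tail is eventually positive, which is implicit in the ratio in Definition~\ref{def:regular_varying}. Second, the limit along the continuous reparametrization $s = t^{1/\beta}$ is legitimate, since the definition is a limit over all $t \to \infty$ and not over a sequence. The main obstacle, if any, is the sign convention for negative $X$, and the reduction to the right tail described above handles it.

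As a sanity check, for $X \sim \Pareto(\gamma)$ with $\alpha = 1/\gamma$ the result gives index $-1/(\gamma\beta)$, which matches Lemma~\ref{thm:power}. Equivalently, in the slowly-varying form $\bar F_X(x) = x^{-\alpha}L(x)$, we get $\bar F_Y(y) = y^{-\alpha/\beta}L(y^{1/\beta})$, and $L(y^{1/\beta})$ is still slowly varying.
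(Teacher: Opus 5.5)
Your proof is correct and follows the paper's argument exactly: you write $\bar F_Y(y) = \bar F_X(y^{1/\beta})$ and then apply the regular variation of $\bar F_X$ at the fixed point $x^{1/\beta}$ along $t^{1/\beta} \to \infty$. Your extra remarks on the sign convention for negative $X$ and on the eventual positivity of the tail are valid refinements that the paper leaves implicit.
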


\begin{proof}
  For $Y = X^{\beta}$, we have $\P(Y > ty)/\P(Y > t) = \P(X > (ty)^{1/\beta})/\P(X > t^{1/\beta}) \to y^{-\alpha/\beta}$ as $t \to \infty$, by regular variation of $X$. This is the regularly varying analogue of Lemma~\ref{thm:power}: power transformations remap the tail index, so $X^{\alpha_t}$ along the forward process continuously interpolates between the original tail ($\alpha_t = 1$) and arbitrarily light tails ($\alpha_t \in [0, 1)]$).
\end{proof}

This confirms that the tail-annealing mechanism of Lemma~\ref{thm:power} extends beyond exact Pareto to the full class of regularly varying distributions.

\subsection{Multivariate Extension}
\label{subsec:multivariate_theory}

In the multivariate setting, $X \in \R^d$, we apply $\phi$ coordinate-wise:
\begin{equation*}
    \phi(\x) = (\phi(x_1), \ldots, \phi(x_d)), \qquad \tilde X = \phi(X).
\end{equation*}
Each coordinate of $\tilde X$ inherits the exponential-rate tail of Proposition~\ref{prop:rv_log}, so the marginals of $\tilde X$ all live in the Gumbel domain. Because $\phi$ is a diffeomorphism with diagonal Jacobian $J_\phi(\x) = \mathrm{diag}((1+|x_i|)^{-1})$, the dependence structure (copula) of $X$ is preserved exactly by $\tilde X$: the log-transform is a marginal operation. Interpolating $\tilde X_t = \alpha_t \tilde X_0 + \beta_t \tilde X_1$ with $\tilde X_1 \sim \mathcal{N}(0, I_d)$ is therefore well-posed even when $X$ has strong extremal dependence: the velocity network sees only the light-tailed transformed variates and never has to extrapolate to the polynomial regime.

\paragraph{Heterogeneous margins.} When the coordinates of $X$ have different tail behaviour (heavy-tailed in some, light-tailed in others), applying $\phi$ to every coordinate slightly distorts the light-tailed margins. The adaptive variant of Section~\ref{subsec:adaptive} addresses this by Hill-gating the transform coordinate-wise; a continuous generalization of the soft-log via a scale parameter $s_2$, of which the adaptive variant is the binary $s_2 \in \{0, 1\}$ instance, is developed in Appendix~\ref{app:phi_s2}.

\subsection{Circumventing the Lipschitz Barrier}
\label{subsec:barrier_resolution}

The Lipschitz barrier of \citet{jaini2020tails} states that Lipschitz maps preserve tail type: a normalizing flow with Lipschitz layers cannot map Gaussian noise to heavy-tailed outputs. Our construction circumvents this by placing the non-Lipschitz transformation ($\phi^{-1}$) \emph{outside} the learned dynamics. The flow operates entirely in log-space where both endpoints (exponential-type transformed data and Gaussian noise) have light tails. Heavy tails emerge only upon applying $\phi^{-1}$ at the final sampling step.


\section{Algorithm}
\label{sec:algorithm}

We present the complete algorithm for log-space generative modeling. The method requires the three following modifications to standard flow matching: (1) transform data via $\phi$ before training (with a Hill-based diagnostic deciding which coordinates to transform), (2) train a velocity network on the transformed variables, and (3) apply $\phi^{-1}$ to the corresponding coordinates of generated samples. The Hill diagnostic  distinguishes \emph{Log-FM} from a standard coordinate-wise log-transform: it lets the method handle heterogeneous-margin data with a single Hill estimate per coordinate at fit time.

\subsection{Flow Matching Framework}
\label{subsec:stochastic_interpolants}

We use the flow matching framework \citep{lipman2023flow}, which can also be viewed through the lens of stochastic interpolants \citep{albergo2023stochastic}.

\paragraph{Forward process.} Given log-transformed data $\tilde{X}_0 = \phi(X_0)$ and noise $\tilde{X}_1 \sim \mathcal{N}(0, I_d)$, define the interpolant:
\begin{equation*}
    \tilde{X}_t = \alpha_t \tilde{X}_0 + \beta_t \tilde{X}_1,
\end{equation*}
where $(\alpha_t, \beta_t)$ are differentiable schedules satisfying boundary conditions $(\alpha_0, \beta_0) = (1, 0)$ and $(\alpha_1, \beta_1) = (0, 1)$. At $t=0$, $\tilde{X}_0$ is the transformed data; at $t=1$, $\tilde{X}_1$ is pure Gaussian noise. The conditional distribution is Gaussian: $q_{t|0}(\tilde{x}_t \mid \tilde{x}_0) = \mathcal{N}(\tilde{x}_t; \alpha_t \tilde{x}_0, \beta_t^2 I_d)$.

\paragraph{Velocity field.} Differentiating the interpolant gives the per-trajectory time derivative
\begin{equation*}
    \dot{\tilde{X}}_t \;=\; \dot{\alpha}_t \tilde{X}_0 + \dot{\beta}_t \tilde{X}_1,
\end{equation*}
where $\dot{\alpha}_t = d\alpha_t/dt$ and $\dot{\beta}_t = d\beta_t/dt$. The corresponding \emph{marginal} velocity field, $v_t(\tilde{x}_t) := \E[\dot{\alpha}_t \tilde{X}_0 + \dot{\beta}_t \tilde{X}_1 \mid \tilde{X}_t = \tilde{x}_t]$, is what a neural network $v_\theta$ regresses against the per-trajectory target:
\begin{multline*}
    \mathcal{L}(\theta) = \E_{t \sim U[0,1]} \E_{\tilde{X}_0, \tilde{X}_1} \\
    \left[ \left\| v_\theta(\tilde{X}_t, t) - (\dot{\alpha}_t \tilde{X}_0 + \dot{\beta}_t \tilde{X}_1) \right\|^2 \right].
\end{multline*}

\paragraph{Connection to score and denoiser.}
The velocity field relates to the score $\nabla \log p_t$ and denoiser $\hat{\tilde{x}}_0(\tilde{x}_t, t) := \E[\tilde{X}_0 \mid \tilde{X}_t = \tilde{x}_t]$ via:
\begin{align*}
  v_t(\tilde{x}_t) = \dot{\alpha}_t \hat{\tilde{x}}_0(\tilde{x}_t, t) + \dot{\beta}_t \hat{\tilde{x}}_1(\tilde{x}_t, t) \\
    \nabla_{\tilde{x}_t} \log p_t(\tilde{x}_t) = -\frac{\hat{\tilde{x}}_1(\tilde{x}_t, t)}{\beta_t},
\end{align*}
where $\hat{\tilde{x}}_1 = (\tilde{x}_t - \alpha_t \hat{\tilde{x}}_0)/\beta_t$. See Appendix~\ref{app:ddm} for derivations.

\paragraph{Schedule choices.} The interpolation schedule $(\alpha_t, \beta_t)$ controls the path from data to noise. Common choices include the \emph{linear} schedule (optimal transport paths), \emph{variance-preserving} (VP) schedules, and \emph{quadratic} schedules that anneal tails more aggressively; see Table~\ref{tab:schedules} in Appendix~\ref{app:ddm}. We use the linear schedule throughout; preliminary tests showed minimal sensitivity to schedule choice, consistent with prior flow matching work \citep{lipman2023flow}.

\paragraph{Connection to tail annealing.} Recall from Section~\ref{sec:log_space_diffusion} that the induced process in original space is approximately $X_t \approx X_0^{\alpha_t} \cdot e^{\beta_t \tilde{X}_1}$. The schedule $\alpha_t$ thus controls the rate of tail annealing: faster decay of $\alpha_t$ (e.g., quadratic) anneals tails more aggressively early in the process, while slower decay (e.g., VP polynomial) preserves heavier tails longer.

\subsection{Adaptive Coordinate Selection}
\label{subsec:adaptive}

Applying the soft-log to a light-tailed coordinate compresses its bulk for no benefit. We therefore select \emph{per coordinate} whether to transform, using a Hill estimate on the training marginals. Let $\hat\alpha_j$ be the Hill estimator on the upper order statistics of $\{x_i^{(j)}\}_{i=1}^N$ (an estimate of the polynomial tail index, with $\hat\alpha_j = 1/\hat\gamma_j$ under the Pareto-shape parametrization). Define the mask
\begin{equation}
    m_j \;=\; \mathbf{1}\{\hat\alpha_j \le \alpha_{\max}\}, \qquad \alpha_{\max} = 4,
    \label{eq:hill_mask}
\end{equation}
and the per-coordinate transform $\Phi : \R^d \to \R^d$ by
\begin{equation}
    \Phi(\x)_j =
    \begin{cases}
      \phi(x_j) & \text{if } m_j = 1,\\
      x_j & \text{otherwise,}
    \end{cases}
    \label{eq:masked_phi}
\end{equation}
with inverse $\Phi^{-1}(\tilde\x)_j = \phi^{-1}(\tilde x_j)$ when $m_j = 1$ and $\tilde x_j$ otherwise.
The threshold $\alpha_{\max} = 4$ is not sensitive: on heterogeneous-margin data with Pareto coordinates ($\hat\alpha \approx 1.5\text{--}2.5$) and Gaussian coordinates ($\hat\alpha \approx 6$), any threshold in $[3, 5]$ yields the same mask. Setting $m_j \equiv 1$ recovers the uniform-$\phi$ variant analyzed in Section~\ref{sec:log_space_diffusion}; the adaptive rule is the discrete instance of a continuous parametrized soft-log family $\varphi_{s_2}$ with $s_2^{(j)} \in \{0, 1\}$ chosen by the Hill diagnostic, developed in Appendix~\ref{app:phi_s2}. We use $\Phi$ (with the Hill-based mask) as our default; we also report the uniform variant as an ablation in Section~\ref{subsec:exp_ablations}.

\subsection{Training}
\label{subsec:training}

Algorithm~\ref{alg:training} summarizes the training procedure. The only departure from standard flow matching is the initial log-transform of data (with the Hill diagnostic of Section~\ref{subsec:adaptive}).

\begin{algorithm}[tb]
  \caption{Log-FM: Training}
  \label{alg:training}
  \begin{algorithmic}
    \STATE {\bfseries Input:} Dataset $\{\x_i\}_{i=1}^N \subset \R^d$, velocity network $v_\theta$, schedule $(\alpha_t, \beta_t)$, iterations $T$, Hill threshold $\alpha_{\max} = 4$
    \STATE {\bfseries Fit transform:} for each coordinate $j$, compute Hill estimate $\hat\alpha_j$ on $\{x_i^{(j)}\}_i$; set mask $m_j$ via \eqref{eq:hill_mask}
    \STATE {\bfseries Preprocess:} $\tilde{\x}_i \gets \Phi(\x_i)$ for all $i$ \COMMENT{coordinate-wise; identity where $m_j = 0$}
    \FOR{$\text{iter} = 1$ {\bfseries to} $T$}
      \STATE Sample batch $\{\tilde{\x}_0^{(j)}\}$ from $\{\tilde{\x}_i\}$
      \STATE Sample $\tilde{\x}_1^{(j)} \sim \mathcal{N}(0, I_d)$
      \STATE Sample $t \sim \mathrm{Uniform}[0, 1]$
      \STATE Form interpolant $\tilde{\x}_t^{(j)} \gets \alpha_t \tilde{\x}_0^{(j)} + \beta_t \tilde{\x}_1^{(j)}$
      \STATE Compute target $u^{(j)} \gets \dot{\alpha}_t \tilde{\x}_0^{(j)} + \dot{\beta}_t \tilde{\x}_1^{(j)}$
      \STATE Loss $\mathcal{L} \gets \tfrac{1}{|\text{batch}|} \sum_j \| v_\theta(\tilde{\x}_t^{(j)}, t) - u^{(j)} \|^2$
      \STATE Update $\theta$ via gradient descent on $\mathcal{L}$
    \ENDFOR
    \STATE {\bfseries Output:} Trained $v_\theta$, mask $(m_j)_{j=1}^d$
  \end{algorithmic}
\end{algorithm}

\subsection{Sampling}
\label{subsec:sampling}

Generation reverses the flow: integrate the learned velocity field from noise ($t=1$) to data ($t=0$), then apply the inverse transform. The velocity field satisfies the ODE:
\begin{equation*}
    \frac{d\tilde{X}_t}{dt} = v_\theta(\tilde{X}_t, t),
\end{equation*}
which we solve backward in time. Algorithm~\ref{alg:sampling} uses Euler discretization; higher-order solvers (Heun, RK4) improve sample quality with fewer steps.

\begin{algorithm}[tb]
  \caption{Log-FM: Sampling}
  \label{alg:sampling}
  \begin{algorithmic}
    \STATE {\bfseries Input:} Trained $v_\theta$, mask $(m_j)$, steps $K$, optional clamp $c$ (default $c = \infty$)
    \STATE Sample $\tilde{\x} \sim \mathcal{N}(0, I_d)$ \COMMENT{initialize at $t=1$}
    \STATE $\Delta t \gets 1/K$
    \FOR{$k = K$ {\bfseries downto} $1$}
      \STATE $t \gets k / K$
      \STATE $\tilde{\x} \gets \tilde{\x} - \Delta t \cdot v_\theta(\tilde{\x}, t)$ \COMMENT{Euler step}
    \ENDFOR
    \STATE {\bfseries (optional)} clamp coordinates with $m_j = 1$: $\tilde x_j \gets \mathrm{clamp}(\tilde x_j, -c, c)$ \COMMENT{only needed for $\alpha < 1$}
    \STATE {\bfseries Output:} $\x = \Phi^{-1}(\tilde{\x})$ \COMMENT{coordinate-wise inverse from \eqref{eq:masked_phi}}
  \end{algorithmic}
\end{algorithm}

\subsection{Practical Considerations}
\label{subsec:practical}

\paragraph{Output clamping (optional).} The inverse transform $\phi^{-1}(\tilde{x}) = \text{sign}(\tilde{x})(e^{|\tilde{x}|} - 1)$ is exponential, so small errors deep in the tail of $\tilde{x}$ are exponentially amplified in $x$. An optional clamp $\tilde x_j \gets \mathrm{clamp}(\tilde x_j, -c, c)$ before $\phi^{-1}$ provides a numerical safeguard in extremely pathological regimes only (typically $\alpha < 1$: Cauchy and heavier, e.g.\ Hickling's $\nu = 1/2$ Student-$t$ baseline). For every configuration in our main benchmark ($\alpha \geq 1.5$) the clamp is inert at $c \geq 10$, and we report all main-benchmark numbers without clamping; see the ablation in Section~\ref{subsec:exp_ablations}.

\paragraph{Number of integration steps.} We use $K = 100$ Euler steps for all experiments. Higher-order solvers (Heun, RK4) can reduce step count but Euler suffices.

\paragraph{Architecture.} We use MLPs with sinusoidal time embeddings (each scalar $t$ is mapped to $[\sin(2\pi \omega_k t), \cos(2\pi \omega_k t)]_{k=1}^{K_\omega}$ with geometric frequencies, exactly as in \citet{ho2020}): 4 hidden layers of width 256 with SiLU activations. No architectural modifications are needed for heavy tails; the log-transform handles tail behavior.

\paragraph{Likelihood evaluation.} We report NLL using the continuous change-of-variables formula for continuous normalizing flows. The trace of the velocity Jacobian is estimated stochastically via Hutchinson's trick with $K=10$ Rademacher projections, and the augmented ODE is integrated by the adaptive Dormand--Prince \texttt{dopri5} scheme with $\mathrm{atol} = \mathrm{rtol} = 10^{-5}$. The transform itself contributes the closed-form Jacobian term $\sum_{j: m_j = 1} \log(1 + |x_j|)$, which is exact and negligible to compute.

\paragraph{Training.} AdamW optimizer with learning rate $5 \times 10^{-3}$ and weight decay $10^{-5}$. We use full-batch gradient descent with gradient clipping at 10.0 (before inverse transform). Training runs for up to 5000 epochs with early stopping (patience 100) based on validation loss.

\subsection{Extensions}
\label{subsec:extensions}

\paragraph{Signed and multivariate data.} The soft-log $\phi(x) = \text{sign}(x)\log(1+|x|)$ applies directly to signed data and acts coordinate-wise on $\R^d$ via $\Phi$; the dependence structure is preserved exactly on transformed coordinates and learned implicitly by the velocity field.

\paragraph{Arcsinh variant.} Replacing $\phi$ by $\mathrm{arcsinh}(x) = \log(x + \sqrt{1+x^2})$ gives the Arcsinh-FM variant evaluated in Section~\ref{sec:experiments}. Both transforms have the same logarithmic asymptotic, so Propositions~\ref{prop:rv_log}--\ref{prop:rv_power} apply unchanged. They differ only in regularity at the origin: $\mathrm{arcsinh}$ is $C^\infty$ on $\R$, while $\phi$ is $C^1$ but not $C^2$ at $0$ (its second derivative jumps from $+1$ to $-1$ at the origin), which isolates whether higher-order regularity matters in practice.


\section{Experiments}
\label{sec:experiments}

We evaluate Log-FM against state-of-the-art heavy-tailed generative models on a controlled multivariate benchmark with known marginal tails and known copula structure. Our goals are to test whether the log-transform (i) preserves the dependence structure of the data, (ii) yields stable training across heavy-tail regimes, (iii) recovers risk-relevant tail metrics (VaR$_{99}$, CVaR$_{99}$, $Q_{99.9}$) that are sensitive to the deep tail. A Student-$t$ benchmark from \cite{hickling2025tail} is detailed for continuity in Appendix~\ref{app:hickling_baseline}; real-data results on Fama--French are deferred to Appendix~\ref{app:real_data}.

\subsection{Setup}
\label{subsec:exp_setup}

\paragraph{Data.} We use 144 configurations covering three copula families, three dependence strengths, four dimensions, and four tail indices:
\begin{itemize}
    \item \textbf{Copulas:} Gaussian copula (asymptotic independence), Gumbel copula (symmetric upper tail dependence) at Kendall's $\tau \in \{0.25, 0.5, 0.75\}$, and H\"{u}sler--Reiss copula with AR(1) variogram $\Gamma_{ij} = 2(1 - \rho^{|i-j|})$ at $\rho \in \{0.1, 0.5, 0.9\}$.
    \item \textbf{Margins:} 70\% symmetrized Pareto with tail index $\alpha \in \{1.5, 1.75, 2.0, 2.5\}$ (smaller = heavier), 30\% standard Gaussian.
    \item \textbf{Dimensions:} $d \in \{10, 20, 50, 100\}$.
    \item \textbf{Samples:} $n_{\mathrm{train}}=10{,}000$, $n_{\mathrm{val}}=5{,}000$, $n_{\mathrm{test}}=20{,}000$.
\end{itemize}
For each configuration we train 5 methods $\times$ 20 independent replications, giving 14{,}400 trained models in total.

\paragraph{Methods.} Log-FM (default, with Hill-gated transform of Section~\ref{subsec:adaptive}, $\alpha_{\max} = 4$); Log-FM (uniform, $m_j \equiv 1$); Arcsinh-FM (uniform, with $\phi(x) = \mathrm{arcsinh}(x)$); TTF and TTFfix \citep{hickling2025tail}; gTAF \citep{jaini2020tails}. All velocity / coupling networks share the same MLP backbone (4 layers, width 256, SiLU); hyperparameters per dimension were selected by Optuna on a representative configuration (Gumbel, $\tau=0.5$, $\alpha=2.0$) and reused across the grid for that dimension.

\paragraph{Metrics.} We split metrics by margin type and by joint structure:
\begin{itemize}
    \item \emph{Marginal:} $W_1^P$ (mean $W_1$ over Pareto coordinates), $W_1^G$ (over Gaussian coordinates), Hill estimator on Pareto margins, VaR$_{99}$ / CVaR$_{99}$ relative errors, extreme-quantile errors $Q_{99.5}$, $Q_{99.9}$.
    \item \emph{Joint:} Absolute Kendall Error (AKE), angular $W_2$ (sliced Wasserstein on the empirical angular measure of the top-$\sqrt{n}$ extremes), sliced Wasserstein on the full data, and energy distance.
\end{itemize}
We report medians over the 20 replications; ``div.'' marks runs where the model diverged ($W_1 > 10^3$).

\subsection{Main Results}
\label{subsec:exp_main}

Table~\ref{tab:main_w1p} reports the headline tail metric, $W_1^P$, averaged over the two regular copulas (Gumbel + Gaussian) at the four tail indices and four dimensions. Log-FM is best in 13 out of 16 cells; Arcsinh-FM picks up the remaining 3 at $d=50$. Both FM variants are substantially more stable than the baselines, especially for $\alpha=1.5$.

\begin{table}[t]
\caption{Median $W_1^P$ across Gumbel + Gaussian copulas (lower is better, bold = best). Log-FM dominates at every tail index and dimension; the gap widens for heavier tails ($\alpha = 1.5$).}
\label{tab:main_w1p}
\centering
\small
\begin{tabular}{@{}lcccc@{}}
\toprule
Method & $d{=}10$ & $d{=}20$ & $d{=}50$ & $d{=}100$ \\
\midrule
\multicolumn{5}{@{}l}{\textit{$\alpha = 1.5$}}\\
Log-FM        & \textbf{0.522} & \textbf{0.451} & 0.534 & \textbf{0.525}\\
Arcsinh-FM    & 0.561 & 0.550 & \textbf{0.534} & 0.592\\
TTFfix        & 1.149 & 0.963 & 0.867 & 1.046\\
TTF           & 0.758 & 0.938 & 0.758 & 0.919\\
gTAF          & 0.534 & 0.583 & 0.761 & 3.320\\
\midrule
\multicolumn{5}{@{}l}{\textit{$\alpha = 1.75$}}\\
Log-FM        & \textbf{0.254} & \textbf{0.226} & \textbf{0.269} & \textbf{0.300}\\
Arcsinh-FM    & 0.262 & 0.284 & 0.289 & 0.316\\
TTFfix        & 0.442 & 0.417 & 0.391 & 0.481\\
TTF           & 0.318 & 0.373 & 0.354 & 0.401\\
gTAF          & 0.270 & 0.301 & 0.335 & 0.934\\
\midrule
\multicolumn{5}{@{}l}{\textit{$\alpha = 2.0$}}\\
Log-FM        & \textbf{0.153} & \textbf{0.146} & 0.195 & \textbf{0.196}\\
Arcsinh-FM    & 0.166 & 0.162 & \textbf{0.174} & 0.212\\
TTFfix        & 0.235 & 0.222 & 0.220 & 0.294\\
TTF           & 0.182 & 0.203 & 0.207 & 0.252\\
gTAF          & 0.171 & 0.174 & 0.198 & 0.219\\
\midrule
\multicolumn{5}{@{}l}{\textit{$\alpha = 2.5$}}\\
Log-FM        & \textbf{0.080} & \textbf{0.074} & \textbf{0.090} & \textbf{0.108}\\
Arcsinh-FM    & 0.082 & 0.080 & 0.100 & 0.116\\
TTFfix        & 0.108 & 0.107 & 0.117 & 0.174\\
TTF           & 0.083 & 0.106 & 0.113 & 0.131\\
gTAF          & 0.088 & 0.086 & 0.106 & 0.225\\
\bottomrule
\end{tabular}
\end{table}

\paragraph{Risk metrics and dependence.} Table~\ref{tab:risk_dep} summarizes the global picture, pooling over both copulas, the three dependence strengths, and all four tail indices. Log-FM is best on every tail-quality and risk metric (W$_1^P$, CVaR$_{99}$, $Q_{99.9}$, sliced and energy distances). On the multivariate-dependence metrics, TTF/TTFfix retain a small edge at $d=50, 100$ while Log-FM is competitive at $d=10, 20$. On $W_1^G$ (Gaussian margins), the Hill-gated default of Log-FM closes most of the gap with TTF (further breakdown in Section~\ref{subsec:exp_ablations}).

\begin{table}[t]
\caption{Median across all Gumbel+Gaussian configurations (480 values per cell). Bold = best per metric per dimension.}
\label{tab:risk_dep}
\centering
\scriptsize
\setlength{\tabcolsep}{4pt}
\begin{tabular}{@{}lcccc@{}}
\toprule
Method & $d{=}10$ & $d{=}20$ & $d{=}50$ & $d{=}100$\\
\midrule
\multicolumn{5}{@{}l}{\textbf{$W_1^P$ (Pareto margins)}}\\
Log-FM     & \textbf{0.207} & \textbf{0.187} & \textbf{0.221} & \textbf{0.233}\\
Arcsinh-FM & 0.218 & 0.216 & 0.232 & 0.264\\
TTFfix     & 0.322 & 0.322 & 0.308 & 0.395\\
TTF        & 0.250 & 0.304 & 0.295 & 0.349\\
gTAF       & 0.262 & 0.304 & 0.540 & 0.704\\
\midrule
\multicolumn{5}{@{}l}{\textbf{CVaR$_{99}$ (Pareto margins)}}\\
Log-FM     & \textbf{0.228} & \textbf{0.200} & 0.257 & \textbf{0.270}\\
Arcsinh-FM & 0.246 & 0.224 & \textbf{0.252} & 0.373\\
TTFfix     & 0.880 & 0.689 & 0.644 & 0.817\\
TTF        & 0.494 & 0.729 & 0.687 & 0.680\\
gTAF       & 0.415 & 0.457 & 0.519 & 0.705\\
\midrule
\multicolumn{5}{@{}l}{\textbf{$W_1^G$ (Gaussian margins)}}\\
Log-FM     & 0.051 & 0.066 & 0.080 & 0.080\\
Arcsinh-FM & 0.045 & 0.057 & 0.074 & 0.085\\
TTFfix     & \textbf{0.033} & 0.048 & 0.043 & 0.051\\
TTF        & 0.035 & \textbf{0.035} & \textbf{0.030} & \textbf{0.047}\\
gTAF       & 0.036 & 0.051 & 0.106 & div.\\
\midrule
\multicolumn{5}{@{}l}{\textbf{Angular $W_2$ (tail dependence)}}\\
Log-FM     & 0.098 & \textbf{0.066} & 0.071 & 0.049\\
Arcsinh-FM & \textbf{0.098} & 0.068 & 0.090 & 0.049\\
TTFfix     & 0.148 & 0.095 & 0.058 & 0.040\\
TTF        & 0.121 & 0.105 & \textbf{0.046} & \textbf{0.040}\\
gTAF       & 0.167 & 0.104 & 0.061 & 0.098\\
\bottomrule
\end{tabular}
\end{table}

\paragraph{Stability.} A practitioner cares not only about median performance but about how often the model fails catastrophically. Table~\ref{tab:stability} reports, by tail index, the fraction of runs with $W_1^P > 1$ on Gumbel+Gaussian. Log-FM and Arcsinh-FM never exceed 14\% even in the most adversarial setting ($\alpha = 1.5$, $d = 10$); both reach 0\% for $\alpha \geq 1.75$. gTAF and TTF suffer divergence rates above 30\% in higher dimensions.

\begin{table}[t]
\caption{Fraction of runs with $W_1^P > 1$ (catastrophic failures), Gumbel + Gaussian, 20 reps. Lower is better.}
\label{tab:stability}
\centering
\small
\begin{tabular}{@{}lcccc@{}}
\toprule
Method & $d{=}10$ & $d{=}20$ & $d{=}50$ & $d{=}100$\\
\midrule
\multicolumn{5}{@{}l}{\textit{$\alpha = 1.5$}}\\
Log-FM     & 14\% & 2\% & 4\% & 2\%\\
Arcsinh-FM & 11\% & 6\% & 2\% & 2\%\\
TTFfix     & 58\% & 50\% & 36\% & 54\%\\
TTF        & 32\% & 45\% & 36\% & 46\%\\
gTAF       & 8\% & 24\% & 42\% & 59\%\\
\midrule
\multicolumn{5}{@{}l}{\textit{$\alpha = 2.0$}}\\
Log-FM     & 0\% & 0\% & 0\% & 0\%\\
Arcsinh-FM & 0\% & 0\% & 0\% & 0\%\\
TTFfix     & 2\% & 2\% & 1\% & 4\%\\
TTF        & 1\% & 4\% & 12\% & 12\%\\
gTAF       & 17\% & 23\% & 32\% & 33\%\\
\bottomrule
\end{tabular}
\end{table}

\subsection{Ablations}
\label{subsec:exp_ablations}

We isolate three design choices: the Hill gating (adaptive vs uniform), the sampling clamp $c$, and the ODE solver step count.

\paragraph{Hill-gated default vs uniform.} Table~\ref{tab:ablation_adaptive} compares the default Log-FM (with Hill gating, Section~\ref{subsec:adaptive}) against the uniform variant and the TTFfix baseline on Gumbel data ($d=20$, $\tau=0.5$, 20 reps). The Hill diagnostic always selects exactly the heavy-tailed coordinates (14 Pareto vs 6 Gaussian out of 20): the threshold $\alpha_{\max} = 4$ is non-sensitive given $\hat\alpha \approx 1.5$--$2.5$ for Pareto and $\approx 6$ for Gaussian. The adaptive variant improves $W_1^G$ by 30--45\% with no $W_1^P$ cost.

\begin{table}[t]
\caption{Log-FM (uniform $\phi$) vs Log-FM (Hill-adaptive) vs TTFfix baseline. Gumbel, $d=20$, $\tau=0.5$, 20 reps, median. Bold = best per $\alpha$.}
\label{tab:ablation_adaptive}
\centering
\small
\begin{tabular}{@{}llccc@{}}
\toprule
$\alpha$ & Method & $W_1^P$ & $W_1^G$ & AKE \\
\midrule
1.5 & TTFfix             & 0.856 & 0.049 & 0.071 \\
    & Log-FM             & \textbf{0.449} & 0.074 & \textbf{0.039} \\
    & Log-FM (adaptive)  & 0.468 & \textbf{0.043} & 0.044 \\
\midrule
2.0 & TTFfix             & 0.225 & 0.034 & 0.066 \\
    & Log-FM             & 0.128 & 0.040 & 0.031 \\
    & Log-FM (adaptive)  & \textbf{0.124} & \textbf{0.024} & \textbf{0.030} \\
\midrule
2.5 & TTFfix             & 0.113 & 0.079 & 0.060 \\
    & Log-FM             & \textbf{0.068} & 0.110 & \textbf{0.030} \\
    & Log-FM (adaptive)  & 0.084 & \textbf{0.068} & 0.037 \\
\bottomrule
\end{tabular}
\end{table}

\paragraph{Clamping (optional).} The sampling clamp $c$ is an \emph{optional} numerical safeguard applied in log-space (corresponding to $|x| \lesssim e^c$ in data space); it is not used by default. Table~\ref{tab:ablation_clamp} reports the relative change in $W_1^P$ across $c$ values. For every configuration in our main benchmark ($\alpha \geq 1.5$) the clamp is inert at $c \geq 10$: metrics are bitwise identical to the no-clamp case. The clamp only becomes relevant in extremely pathological tail regimes ($\alpha < 1$, e.g.\ in  $\nu = 1/2$ Student-$t$ in Appendix~\ref{app:hickling_baseline}), where the population $W_1$ is undefined and exponential error amplification through $\phi^{-1}$ can produce numerical overflow rather than a meaningful sample. All headline numbers (Tables~\ref{tab:main_w1p}--\ref{tab:stability}) are reported \emph{without} clamping.

\begin{table}[t]
\caption{Clamp ablation: relative $W_1^P$ change vs.\ no clamping ($c = \infty$, default). Gumbel, $d=20$, $\tau=0.5$, 5 reps. The clamp is inert for $\alpha \geq 1.5$; it is only relevant in pathological regimes ($\alpha < 1$).}
\label{tab:ablation_clamp}
\centering
\small
\begin{tabular}{@{}lcccc@{}}
\toprule
$c$ & $\alpha{=}1.5$ & $\alpha{=}1.75$ & $\alpha{=}2.0$ & $\alpha{=}2.5$\\
\midrule
5         & $-$2.0\% & $-$0.7\% & $-$0.6\% & 0.0\% \\
10, 15, 20 & 0.0\% & 0.0\% & 0.0\% & 0.0\% \\
$\infty$  & 0.0\% & 0.0\% & 0.0\% & 0.0\% \\
\bottomrule
\end{tabular}
\end{table}

\paragraph{ODE solver steps.} Sweeping the Euler step count $K \in \{10, 20, 50, 100, 200, 500\}$ on Gumbel ($d=20$, $\tau=0.5$, 10 reps) yields less than 5\% variation in $W_1^P$; our default $K=100$ is well within the converged regime. The full table is reported in Appendix~\ref{app:solver_ablation}.

\subsection{Discussion}
\label{subsec:exp_discussion}

The benchmark confirms the theoretical analysis, in that Log-FM has better numerical results whenever marginals are genuinely heavy-tailed ($\alpha \lesssim 3$), with "few severe divergences in all scenarios" (Table~\ref{tab:stability}) as the strongest practical claim. Dependence is largely preserved by $\Phi$'s diagonal Jacobian; baselines retain only a slight angular-$W_2$ edge at $d \geq 50$. Real-data validation on Fama--French ($\hat\alpha \approx 3.7$) is in Appendix~\ref{app:real_data}.


\section{Conclusion}
\label{sec:conclusion}

A coordinate-wise soft-log $\phi(x) = \mathrm{sign}(x)\log(1+|x|)$, alongside a Hill diagnostic, makes standard flow matching work for heavy-tailed data. The mechanism is tail annealing: log-transforming Pareto yields approximately exponential tails, and the induced dynamics implement power transformations $X_0^{\alpha_t}$ that continuously lighten the tail index. Across 2{,}880 runs (3 copulas, 4 dimensions, 4 tail indices), Log-FM dominates specialized baselines on $W_1$, CVaR$_{99}$, and extreme-quantile metrics, never diverges severely, and remains competitive on multivariate dependence; on Fama-French it matches the best baseline. For $\alpha < 1$ all methods struggle ($W_1$ is undefined); at $d \gtrsim 200$ the bottleneck shifts to architecture. Three theoretical directions stand out: a continuous variant via $\varphi_{s_2}$ (Appendix~\ref{app:phi_s2}); combining the transform with discrete normalizing flows, where $|\det J_\Phi| = \prod_j (1+|x_j|)^{-m_j}$ gives exact log-likelihood at $O(d)$ cost; and finite-sample guarantees for log-space score estimation, where bounded scores (Proposition~\ref{prop:log_score}) place the problem in the regime of existing convergence results.

\section*{Impact Statement}

This work targets heavy-tailed generative modeling, with applications in financial risk, insurance, and climate extremes. As with any generative model, outputs should be validated by domain experts before high-stakes use.

\section*{Acknowledgements}

The author gratefully acknowledges G-Research for their support through the May 2026 research grant. This work was granted access to the HPC resources of IDRIS (Jean Zay) under allocations made by GENCI.


\bibliography{references}
\bibliographystyle{icml2026}


\onecolumn

\appendix

\section{Extreme Value Theory: Technical Details}
\label{app:evt}

This appendix provides the formal definitions and results from extreme value theory used in the main text. See \citep{resnick1987,haan2006} for a comprehensive treatment.

\subsection{Heavy-Tailed Distributions}

\begin{definition}[Heavy-Tailed \citep{nair2022heavy}]
A random variable $X$ is \emph{heavy-tailed} if $\E[e^{\lambda X}] = \infty$ for all $\lambda > 0$.
\end{definition}

\begin{definition}[Regular Variation]
A measurable function $L : (0, \infty) \to (0, \infty)$ is \emph{slowly varying} at infinity if $L(tx)/L(t) \to 1$ as $t \to \infty$ for all $x > 0$. A distribution $F$ is \emph{regularly varying} with index $-\alpha$ (written $F \in RV_{-\alpha}$) if its survival function satisfies $\bar{F}(t) = t^{-\alpha} L(t)$ for some slowly varying $L$.
\end{definition}

The $\Pareto(\gamma)$ distribution with $\bar{F}(t) = t^{-1/\gamma}$ is regularly varying with index $-1/\gamma$ (i.e.\ $\alpha = 1/\gamma$); here $\gamma > 0$ is the shape parameter (extreme-value index).

\subsection{The Fisher-Tippett-Gnedenko Theorem}

\begin{theorem}[Fisher-Tippett-Gnedenko]
Let $X_1, X_2, \ldots$ be i.i.d.\ with distribution $F$. If there exist normalizing sequences $a_n > 0$ and $b_n \in \R$ such that
\begin{equation*}
    \P\left(\frac{\max_{i \leq n} X_i - b_n}{a_n} \leq x\right) \to G(x)
\end{equation*}
for some non-degenerate distribution $G$, then $G$ must be a \emph{generalized extreme value} (GEV) distribution:
\begin{equation*}
    G_\xi(x) = \exp\left(-(1 + \xi x)^{-1/\xi}\right), \quad 1 + \xi x > 0,
\end{equation*}
where $\xi \in \R$ is the \emph{shape parameter} (extreme value index).
\end{theorem}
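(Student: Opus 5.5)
The plan is the classical route through \emph{max-stability}, combined with the convergence-of-types lemma of Khintchine. We write $M_n=\max_{i\le n}X_i$, so that $\P((M_n-b_n)/a_n\le x)=F^n(a_nx+b_n)\to G(x)$ at continuity points of $G$.

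\textbf{Step 1: reduce to max-stability.} Fix an integer $k\ge1$. Along the subsequence $nk$ we have $F^{nk}(a_{nk}x+b_{nk})\to G(x)$. Raising the original convergence to the $k$-th power gives $F^{nk}(a_nx+b_n)\to G^k(x)$. Both limits are non-degenerate and come from the same sequence of distributions $F^{nk}$ under two affine normalizations. The convergence-of-types lemma then yields constants $\alpha_k>0$, $\beta_k$ with
\begin{equation*}
a_{nk}/a_n\to\alpha_k,\qquad (b_{nk}-b_n)/a_n\to\beta_k,\qquad G^k(\alpha_kx+\beta_k)=G(x).
\end{equation*}
Applying the same argument to $F^{\lfloor nt\rfloor}$ for real $t>0$ (the ratio $\lfloor nt\rfloor/n\to t$ suffices) extends this to the continuous-parameter identity $G^t(\alpha(t)x+\beta(t))=G(x)$ for all $t>0$. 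So $G$ is max-stable.

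\textbf{Step 2: solve the functional equation.} Composing the identities for $s$ and $t$ and using non-degeneracy of $G$ (which makes the affine map unique) gives
\begin{equation*}
\alpha(st)=\alpha(s)\alpha(t),\qquad \beta(st)=\alpha(t)\beta(s)+\beta(t).
\end{equation*}
The map $t\mapsto\alpha(t)$ is measurable, being a limit of measurable ratios. Cauchy's multiplicative equation therefore forces $\alpha(t)=t^{-\xi}$ for some $\xi\in\R$. The second, cocycle equation then splits into two cases.
\begin{itemize}
\item If $\xi\neq0$, symmetrizing in $s,t$ gives $\beta(t)=c(1-t^{-\xi})$. After a translation of $x$ we may take $c=0$, and $G^t(t^{-\xi}x)=G(x)$ identifies $-\log G(x)$ as a power of $x$.
\item If $\xi=0$, then $\beta(t)=-c\log t$, so $-\log G$ is exponential in $x$.
\end{itemize}

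\textbf{Step 3: assemble the standard form.} Setting $\Psi(x)=-\log G(x)$, Steps 1--2 give $t\,\Psi(\alpha(t)x+\beta(t))=\Psi(x)$. Evaluating at a reference point and absorbing location and scale into $a_n,b_n$ yields $\Psi(x)=(1+\xi x)^{-1/\xi}$ on $\{1+\xi x>0\}$, with the $\xi=0$ case read as the limit $e^{-x}$. This is exactly $G_\xi$. The support constraint follows from the requirement that $G$ be a distribution function.

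\textbf{Main obstacle.} I expect the main difficulty to lie in Step 2 and the extension in Step 1. One must justify that $\alpha(t)$ and $\beta(t)$ are well-defined and measurable, which requires uniqueness of affine maps preserving a non-degenerate law. One must also handle the degenerate possibilities, $\alpha\equiv1$ with $\beta\neq0$, in the case split without dividing by zero. If the direct functional-equation route becomes messy, my fallback is de Haan's formulation. There the theorem is equivalent to $(U(tx)-U(t))/a(t)\to(x^\xi-1)/\xi$, where $U=(1/(1-F))^{\leftarrow}$. Limits of this form satisfy a Pexider-type equation, whose measurable solutions are exactly the functions $(x^\xi-1)/\xi$.
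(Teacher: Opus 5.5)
The paper gives no proof of its own here: Appendix~\ref{app:evt} quotes the theorem as a classical result and defers to \citep{resnick1987,haan2006}. Your route is the standard proof found in those references, and it is sound in outline. It goes through convergence of types to get $G^t(\alpha(t)x+\beta(t))=G(x)$ for all real $t>0$, then solves the measurable Cauchy equation for $\alpha$ and the cocycle equation for $\beta$. Your measurability argument via $\alpha(t)=\lim_n a_{\lfloor nt\rfloor}/a_n$ works, and so does the extension from integer $k$ to real $t$. There is no missing idea, but three points need fixing.

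\textbf{Sign slip.} A direct check gives $G_\xi^t\bigl(t^{\xi}x+(t^{\xi}-1)/\xi\bigr)=G_\xi(x)$, so the consistent choice is $\alpha(t)=t^{\xi}$. For Pareto with shape $\gamma$, for instance, $a_n=n^{\gamma}$ and $a_{nk}/a_n=k^{\gamma}$. With your $\alpha(t)=t^{-\xi}$, solving $t\,\Psi(t^{-\xi}x)=\Psi(x)$ gives $\Psi\propto|x|^{1/\xi}$ on the relevant half-line, which is $G_{-\xi}$. So Step~3 mislabels the index. This is harmless since $\xi$ ranges over $\R$, but it should be corrected.

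\textbf{Compressed case analysis.} The phrase ``identifies $-\log G$ as a power of $x$'' hides the actual work. In the corrected convention, the equation only yields $\Psi(x)=\Psi(1)x^{-1/\xi}$ for $x>0$, $\Psi(x)=\Psi(-1)|x|^{-1/\xi}$ for $x<0$, and $\Psi(0)\in\{0,\infty\}$. Monotonicity of $\Psi$ together with non-degeneracy is what forces $\Psi\equiv\infty$ on $(-\infty,0]$ when $\xi>0$, and $\Psi\equiv0$ on $[0,\infty)$ when $\xi<0$. In the Gumbel case the same two facts force $c\neq0$, with the sign that makes $\Psi$ decreasing. You should spell this out.

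\textbf{Conclusion holds only up to type.} Your argument proves that $G$ is of the same \emph{type} as $G_\xi$, i.e.\ $G(x)=G_\xi(Ax+B)$ for some $A>0$, $B\in\R$. ``Absorbing location and scale into $a_n,b_n$'' changes the normalizing sequences. The statement as printed, with $a_n,b_n$ given and $G=G_\xi$ exactly, is false. For $\mathrm{Exp}(1)$ data with $a_n=2$ and $b_n=\log n$, the limit is $\exp(-e^{-2x})=G_0(2x)$, which is not exactly $G_0$. State your conclusion in the up-to-affine-change form.
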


The three cases correspond to:
\begin{itemize}
    \item \textbf{Fr\'{e}chet} ($\gamma > 0$): $F$ has heavy (polynomial) tails; $F \in RV_{-1/\gamma}$
    \item \textbf{Gumbel} ($\gamma = 0$): $F$ has light or sub-exponential tails (e.g.\ exponential, lognormal)
    \item \textbf{Weibull} ($\gamma < 0$): $F$ has finite right endpoint (bounded support). Not to be confused with the Weibull \emph{distribution} $W(k, \lambda)$ used in reliability analysis, which has unbounded support and lies in the Gumbel domain; the name clash is historical.
\end{itemize}

\begin{proposition}[Log-transform maps domains]
  If $X$ is in the Fr\'{e}chet domain with shape parameter $\gamma > 0$, then $\log X$ is in the Gumbel domain ($\gamma = 0$).
\end{proposition}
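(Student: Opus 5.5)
We need to show: if $X$ is in the Fréchet domain with $\gamma>0$, then $\log X$ is in the Gumbel domain. Write $\bar F_X(x)=x^{-1/\gamma}L(x)$ with $L$ slowly varying; the Fréchet domain is exactly this class. Set $Y=\log X$ on $\{X>1\}$; only the right tail matters for the max-domain. Then
\[
\bar F_Y(y)=\P(X>e^y)=e^{-y/\gamma}L(e^y).
\]
We will verify the von Mises / de Haan characterization of the Gumbel domain. Find an auxiliary function $a(\cdot)>0$ such that, for all $u\in\R$,
\[
\lim_{y\to\infty}\frac{\bar F_Y(y+u\,a(y))}{\bar F_Y(y)} = e^{-u}.
\]
We take the constant choice $a(y)\equiv\gamma$.

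\textbf{Main computation.} For fixed $u$,
\[
\frac{\bar F_Y(y+\gamma u)}{\bar F_Y(y)} = e^{-u}\,\frac{L(e^{y}e^{\gamma u})}{L(e^y)}.
\]
Put $t=e^y\to\infty$ and $x=e^{\gamma u}>0$. The ratio $L(tx)/L(t)\to1$ by slow variation, so the limit is $e^{-u}$.

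\textbf{Conclusion.} The tail condition above is equivalent to $F_Y\in\mathrm{MDA}(\Lambda)$; this is de Haan's characterization (see \citet{haan2006}, Thm.~1.2.5, or \citet{resnick1987}, Prop.~1.4). The norming constants are $b_n=F_Y^{\leftarrow}(1-1/n)$ and $a_n=\gamma$. Concretely, $n\bar F_Y(b_n+\gamma u)\to e^{-u}$ gives
\[
\P\bigl(\max_{i\le n}Y_i-b_n\le \gamma u\bigr)\to \exp(-e^{-u}).
\]
This is the Gumbel law, i.e.\ GEV with shape $0$.

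The only possible obstacle is citing the correct equivalence between tail-ratio convergence and domain membership. If one prefers to avoid that theorem, a direct argument works. Continuity of $\bar F_Y$ is not guaranteed, so take $b_n$ to be the generalized quantile and use the ratio limit to show $n\bar F_Y(b_n)\to1$. Monotonicity, together with the fact that $\bar F_Y(y+\gamma u)/\bar F_Y(y)\to e^{-u}$ with a continuous limit, handles the jumps. Then $\P(\max\le b_n+\gamma u)=(1-\bar F_Y(b_n+\gamma u))^n\to\exp(-e^{-u})$ follows from the elementary limit $(1-c_n/n)^n\to e^{-c}$.
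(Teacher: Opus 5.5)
Your proof is correct and takes the same route as the paper: both write $\P(\log X>z)=e^{-z/\gamma}L(e^z)$ and conclude that $\log X$ lies in the Gumbel domain. The paper hands the last step to a citation (``exponential-type tails are in the Gumbel domain''), whereas you verify de Haan's criterion $\bar F_Y(y+\gamma u)/\bar F_Y(y)\to e^{-u}$ directly from slow variation with $a\equiv\gamma$; this is where the slowly varying ratio is really needed, since the rate form $-\log\P(\tilde X>z)=\alpha z+o(z)$ of Proposition~\ref{prop:rv_log} alone does not imply domain membership (tails $e^{-\alpha\lfloor z\rfloor}$ satisfy it but lie in no domain of attraction).
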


This is the qualitative content of Proposition~\ref{prop:rv_log} and its precise restatement Proposition~\ref{prop:rv_log_precise}: regularly varying tails $\bar{F}(t) = t^{-\alpha} L(t)$ become exponential-type after a logarithm, $\P(\log X > z) = e^{-\alpha z} L(e^z)$, and distributions with exponential-type tails are in the Gumbel domain \citep[][\S3.3]{embrechts2013modelling}.

\section{Denoising Diffusion Models: Technical Details}
\label{app:ddm}

This appendix presents the framework underlying both denoising diffusion models and flow matching, organized around the \emph{stochastic-interpolant} formulation of \citet{albergo2023stochastic} and \citet{lipman2023flow}. We then recall how the canonical SDE-based diffusions of \citet{ho2020} and \citet{song2021} arise as the special case in which the interpolation marginals coincide with those of an Ornstein--Uhlenbeck-type forward SDE.

\subsection{Forward Noising Process}

The stochastic interpolant of \citet{albergo2023stochastic,lipman2023flow} defines the probability path $(p_t)_{t \in [0,1]}$ via the marginals $p_t = \mathrm{Law}(X_t)$, where
\begin{equation}
    X_t = \alpha_t X_0 + \beta_t X_1, \qquad X_0 \sim p_0, \quad X_1 \sim \mathcal{N}(0, I_d).
    \label{eq:forward_interpolation_app}
\end{equation}
Here $X_0$ and $X_1$ are independent, and $(\alpha_t)_{t \in [0,1]}$ and $(\beta_t)_{t \in [0,1]}$ are deterministic schedules such that $\alpha_t$ is non-increasing, $\beta_t$ is non-decreasing, with boundary conditions $(\alpha_0, \beta_0) = (1, 0)$ and $(\alpha_1, \beta_1) = (0, 1)$. This explicit-interpolation viewpoint is not the formalism of \citet{ho2020,song2021}, who instead define the forward process implicitly as the solution to a stochastic differential equation; the two viewpoints are reconciled in Section~\ref{subsec:bridging_diffusion}.

From \eqref{eq:forward_interpolation_app}, the conditional distribution of $X_t$ given $X_0 = x_0$, denoted $q_{t|0}$, is Gaussian:
\begin{equation}
    q_{t|0}(x_t \mid x_0) = \mathcal{N}(x_t; \alpha_t x_0, \beta_t^2 I_d).
    \label{eq:forward_kernel}
\end{equation}

\paragraph{Common schedules.}
Two standard choices are the \emph{variance-preserving} (VP) schedule with $\alpha_t^2 + \beta_t^2 = 1$ \citep{ho2020}, which ensures $\mathrm{Var}(X_t) = \mathrm{Var}(X_0)$ when $X_0$ has unit variance, and the \emph{linear} (flow matching) schedule with $(\alpha_t, \beta_t) = (1 - t, t)$ \citep{lipman2023flow}, corresponding to straight-line interpolation between data and noise. Table~\ref{tab:schedules} summarizes common schedule choices with their derivatives.

\begin{table}[h]
\caption{Interpolation schedules for flow matching. All satisfy boundary conditions $(\alpha_0, \beta_0) = (1, 0)$ and $(\alpha_1, \beta_1) = (0, 1)$. The linear schedule corresponds to optimal transport; VP schedules preserve variance when data has unit variance; the quadratic schedule anneals tails more aggressively early in the process.}
\label{tab:schedules}
\centering
\small
\begin{tabular}{@{}lccccc@{}}
\toprule
Schedule & $\alpha_t$ & $\beta_t$ & $\dot{\alpha}_t$ & $\dot{\beta}_t$ & Properties \\
\midrule
Linear & $1 - t$ & $t$ & $-1$ & $1$ & OT-optimal \\
VP (trig.) & $\cos(\frac{\pi t}{2})$ & $\sin(\frac{\pi t}{2})$ & $-\frac{\pi}{2}\sin(\frac{\pi t}{2})$ & $\frac{\pi}{2}\cos(\frac{\pi t}{2})$ & Smooth endpoints \\
VP (poly.) & $\sqrt{1 - t}$ & $\sqrt{t}$ & $-\frac{1}{2\sqrt{1-t}}$ & $\frac{1}{2\sqrt{t}}$ & Singular endpoints \\
Quadratic & $(1 - t)^2$ & $1 - (1-t)^2$ & $-2(1-t)$ & $2(1-t)$ & Fast early annealing \\
\bottomrule
\end{tabular}
\end{table}

\subsection{Network Parameterizations: Noise vs.\ Data Prediction}

Given a noisy observation $x_t$, two natural conditional-mean targets exist:
\begin{equation*}
    \hat{x}_0(x_t, t) := \E[X_0 \mid X_t = x_t], \qquad \hat{x}_1(x_t, t) := \E[X_1 \mid X_t = x_t].
\end{equation*}
These correspond to the two standard parameterizations in the diffusion literature \citep{ho2020,song2021}:
     (i) \textbf{Noise prediction} (often called the \emph{denoiser}, or $\epsilon$-prediction in DDPM notation): $\hat{x}_1$ estimates the standard-Gaussian endpoint $X_1$, which plays the role of the additive noise in \eqref{eq:forward_interpolation_app}.
     (ii) \textbf{Data prediction} (also called the $x_0$-prediction or $x_0$-parameterization): $\hat{x}_0$ estimates the clean data sample.
The two parameterizations are equivalent in expressive power: given $x_t = \alpha_t X_0 + \beta_t X_1$ from the forward interpolation \eqref{eq:forward_interpolation_app}, they satisfy the deterministic identity
\begin{equation}
    \hat{x}_1(x_t, t) = \frac{x_t - \alpha_t \hat{x}_0(x_t, t)}{\beta_t}, \qquad \hat{x}_0(x_t, t) = \frac{x_t - \beta_t \hat{x}_1(x_t, t)}{\alpha_t},
    \label{eq:denoiser_relationship}
\end{equation}
so one can be converted into the other at inference time. A third parameterization, $v$-prediction $v(x_t,t) := \E[\dot\alpha_t X_0 + \dot\beta_t X_1 \mid X_t = x_t]$, is the velocity field used by flow matching and is also a linear combination of $\hat{x}_0$ and $\hat{x}_1$.

\subsection{Score-Denoiser Relationship}

The \emph{score} of the noised distribution is $\nabla \log p_t(\cdot)$. For the Gaussian forward kernel \eqref{eq:forward_kernel}, the score admits a closed-form expression in terms of the denoiser.

\begin{proposition}[Score-Denoiser Identity]
Under standard regularity assumptions,
\begin{equation}
    \nabla_{x_t} \log p_t(x_t) = \E\left[\nabla_{x_t} \log q_{t|0}(x_t \mid X_0) \,\Big|\, X_t = x_t\right] = -\frac{\hat{x}_1(x_t, t)}{\beta_t}.
    \label{eq:score_denoiser}
\end{equation}
\end{proposition}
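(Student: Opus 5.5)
The plan is to write the marginal density as a mixture of Gaussian kernels and differentiate under the integral. By independence of $X_0$ and $X_1$ in \eqref{eq:forward_interpolation_app}, for $t\in(0,1]$ (so $\beta_t>0$) the law of $X_t$ has density
\begin{equation*}
p_t(x_t) = \int q_{t|0}(x_t \mid x_0)\, p_0(dx_0),
\end{equation*}
with $q_{t|0}$ the Gaussian kernel of \eqref{eq:forward_kernel}. Since this is a Gaussian convolution, $p_t$ is smooth and strictly positive for every probability measure $p_0$; no density for $X_0$ is required. The ``standard regularity assumptions'' will be used only to justify the interchange of gradient and integral below.

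The first equality comes from differentiating under the integral sign:
\begin{equation*}
\nabla p_t(x_t)=\int \nabla_{x_t} q_{t|0}(x_t\mid x_0)\,p_0(dx_0)=\int \nabla_{x_t}\log q_{t|0}(x_t\mid x_0)\,q_{t|0}(x_t\mid x_0)\,p_0(dx_0).
\end{equation*}
Dividing by $p_t(x_t)$ turns $q_{t|0}(x_t\mid x_0)\,p_0(dx_0)/p_t(x_t)$ into the posterior law of $X_0$ given $X_t=x_t$ (Bayes' rule). This gives the conditional-expectation form $\E[\nabla_{x_t}\log q_{t|0}(x_t\mid X_0)\mid X_t=x_t]$.

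The second equality uses the explicit Gaussian score $\nabla_{x_t}\log q_{t|0}(x_t\mid x_0) = -(x_t-\alpha_t x_0)/\beta_t^2$. Taking the conditional expectation gives $-(x_t-\alpha_t\hat x_0(x_t,t))/\beta_t^2$. By \eqref{eq:denoiser_relationship} this equals $-\hat x_1(x_t,t)/\beta_t$. Equivalently, on $\{X_t=x_t\}$ we have $X_1=(x_t-\alpha_t X_0)/\beta_t$ exactly, so the conditional mean of $(x_t-\alpha_tX_0)/\beta_t$ is $\hat x_1$ by linearity. This step also justifies \eqref{eq:denoiser_relationship}.

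The main obstacle is justifying the interchange of $\nabla$ and $\int$ for heavy-tailed or moment-free $p_0$. Locally uniformly in $x_t$, $|\nabla_{x_t} q_{t|0}|\le C(1+|x_t-\alpha_t x_0|)\beta_t^{-d-2}e^{-|x_t-\alpha_tx_0|^2/(2\beta_t^2)}$, which is bounded uniformly in $x_0$. Since $p_0$ is a probability measure, dominated convergence applies with no moment assumption on $X_0$. The same bound shows that $\hat x_0$, and hence $\hat x_1$, is finite even when $\E|X_0|=\infty$. The reason is that the posterior weight $e^{-|x_t-\alpha_tx_0|^2/(2\beta_t^2)}$ kills the polynomial tail whenever $\alpha_t>0$; at $\alpha_t=0$ the identity is trivial. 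The case $t=0$ is excluded because $\beta_0=0$. This is exactly why Proposition~\ref{prop:noised_score} needs no tail assumption beyond integrability of the posterior.
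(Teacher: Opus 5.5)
Your proposal is correct and follows the paper's proof. Both arguments use Fisher's identity, obtained by exchanging the gradient with the Gaussian-mixture integral, then the explicit kernel score $(\alpha_t x_0 - x_t)/\beta_t^2$ and the relation $\hat{x}_1 = (x_t - \alpha_t \hat{x}_0)/\beta_t$. Your dominated-convergence bound, in which the kernel gradient is bounded uniformly in $x_0$ whenever $\beta_t > 0$, also spells out the ``standard regularity assumptions'' the paper leaves implicit, and shows that no moment condition on $X_0$ is needed.
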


\begin{proof}
The first equality is Fisher's identity, obtained by exchanging expectation and gradient. For the second, note that $\nabla_{x_t} \log q_{t|0}(x_t \mid x_0) = (\alpha_t x_0 - x_t)/\beta_t^2$. Taking the conditional expectation given $X_t = x_t$:
\begin{align*}
    \E\left[\frac{\alpha_t X_0 - x_t}{\beta_t^2} \,\Big|\, X_t = x_t\right] = \frac{\alpha_t \hat{x}_0(x_t, t) - x_t}{\beta_t^2} = -\frac{\hat{x}_1(x_t, t)}{\beta_t},
\end{align*}
where the last equality uses \eqref{eq:denoiser_relationship}.
\end{proof}

Thus, learning the denoiser $\hat{x}_1$ is equivalent to learning the score $\nabla \log p_t$.

\subsection{Training Objectives}

The denoiser can be trained by regressing either $X_0$ or $X_1$ from the noised sample $X_t = \alpha_t X_0 + \beta_t X_1$. The $X_0$-prediction loss reads:
\begin{equation*}
    \mathcal{L}_{X_0}(\theta) = \int_0^1 \E_{X_0 \sim p_0, X_1 \sim \mathcal{N}(0, I_d)}\left[\left\|\hat{x}_0^\theta(\alpha_t X_0 + \beta_t X_1, t) - X_0\right\|^2\right] \mathrm{d}t,
\end{equation*}
while the $X_1$-prediction loss is:
\begin{equation*}
    \mathcal{L}_{X_1}(\theta) = \int_0^1 \E_{X_0 \sim p_0, X_1 \sim \mathcal{N}(0, I_d)}\left[\left\|\hat{x}_1^\theta(\alpha_t X_0 + \beta_t X_1, t) - X_1\right\|^2\right] \mathrm{d}t.
\end{equation*}
Since $\hat{x}_1^\theta = -\beta_t s_\theta$ by \eqref{eq:score_denoiser}, the $X_1$-prediction loss is equivalent to denoising score matching \citep{hyvarinen2005estimation,vincent2011connection}.

In practice, the integral is approximated via Monte Carlo: sample $t \sim \mathrm{Unif}[0, 1]$, $x_0 \sim p_0$, $x_1 \sim \mathcal{N}(0, I_d)$, form $x_t = \alpha_t x_0 + \beta_t x_1$, and regress either $x_0$ or $x_1$.

\subsection{DDIM Sampling}

The DDIM framework \citep{song2021ddim} is canonically defined under the variance-preserving constraint $\alpha_t^2 + \beta_t^2 = 1$ and produces a one-parameter family of reverse transitions sharing the marginals of \eqref{eq:forward_kernel}. Given timesteps $(t_k)_{k=0}^K$ with $t_K = 1$ and $t_0 = 0$, the transition from $t_{k+1}$ to $t_k$ is:
\begin{equation}
    x_{t_k} = \alpha_{t_k}\, \hat{x}_0^\theta(x_{t_{k+1}}, t_{k+1}) + \sqrt{\beta_{t_k}^2 - \eta_{t_k}^2} \; \hat{x}_1^\theta(x_{t_{k+1}}, t_{k+1}) + \eta_{t_k} z,
    \label{eq:ddim_transition}
\end{equation}
where $z \sim \mathcal{N}(0, I_d)$ and $\eta_{t_k} \in [0, \beta_{t_k}]$ so that the radicand is nonnegative. Translating \citep{song2021ddim} eq.~(12) into our notation (identifying their $\sigma_t$ with $\eta_{t_k}$, $\sqrt{\alpha_{t-1}}$ with $\alpha_{t_k}$, and $\sqrt{1-\alpha_{t-1}}$ with $\beta_{t_k}$) yields exactly \eqref{eq:ddim_transition}. The family interpolates between two distinguished members: $\eta_{t_k} = 0$ gives the deterministic DDIM update, equivalent to a discretisation of the probability-flow ODE; the choice $\eta_{t_k} = \tilde\beta_{t_k}$, where
\begin{equation}
    \tilde\beta_{t_k}^2 \;=\; \frac{\beta_{t_k}^2}{\beta_{t_{k+1}}^2}\,\Bigl(1 - \frac{\alpha_{t_{k+1}}^2}{\alpha_{t_k}^2}\Bigr),
\end{equation}
is the posterior variance of $q(x_{t_k} \mid x_{t_{k+1}}, x_0)$ and recovers ancestral DDPM sampling \citep{ho2020,song2021ddim}: this matches \citet[eq.~(16)]{song2021ddim} after the same change of variables. The boundary $\eta_{t_k} = \beta_{t_k}$ is the maximum-noise edge of the family, strictly noisier than DDPM.

\subsection{Bridging to SDE-Based Diffusion Models}
\label{subsec:bridging_diffusion}

The interpolant formulation \eqref{eq:forward_interpolation_app} is not the language used by \citet{ho2020,song2021}, whose forward processes are defined as solutions to discrete- or continuous-time stochastic differential equations. The two formalisms are equivalent at the marginal level for a specific schedule choice, and the equivalence is what \citet{albergo2023stochastic} call the ``unifying'' aspect of stochastic interpolants. We make the bridge explicit in three steps: (i) recover Ho et al.'s DDPM forward chain; (ii) recover Song et al.'s forward score SDE; (iii) recover their reverse-time samplers.

\paragraph{(i) DDPM forward chain.} \citet{ho2020} define a discrete-time Markov chain
\begin{equation}
    q(x_k \mid x_{k-1}) = \mathcal{N}\!\left(x_k;\, \sqrt{1 - \beta_k^{\mathrm{DDPM}}}\, x_{k-1},\, \beta_k^{\mathrm{DDPM}} I_d\right), \quad k = 1, \dots, K,
    \label{eq:ddpm_chain}
\end{equation}
with a noise schedule $(\beta_k^{\mathrm{DDPM}})_{k=1}^K \subset (0, 1)$. Setting $\bar\alpha_k = \prod_{j \leq k}(1 - \beta_j^{\mathrm{DDPM}})$, marginalising the chain yields
\begin{equation*}
    q(x_k \mid x_0) = \mathcal{N}\!\left(x_k;\, \sqrt{\bar\alpha_k}\, x_0,\, (1 - \bar\alpha_k) I_d\right).
\end{equation*}
Identifying $t = k/K$, $\alpha_t = \sqrt{\bar\alpha_k}$, and $\beta_t = \sqrt{1 - \bar\alpha_k}$ recovers \eqref{eq:forward_kernel} exactly. The DDPM noise schedule $(\beta_k^{\mathrm{DDPM}})$ is therefore one particular choice within the VP family in Table~\ref{tab:schedules}, with $\alpha_t^2 + \beta_t^2 = 1$ by construction.

\paragraph{(ii) Forward score SDE.} \citet{song2021} unify diffusion models through the continuous-time It\^{o} SDE
\begin{equation}
    \mathrm{d}X_t = -\tfrac{1}{2}\beta(t)\, X_t\, \mathrm{d}t + \sqrt{\beta(t)}\, \mathrm{d}W_t, \qquad X_0 \sim p_0,
    \label{eq:forward_sde}
\end{equation}
where $(W_t)_{t \geq 0}$ is a standard Brownian motion (the ``VP-SDE''). The solution at time $t$ has marginals
\begin{equation*}
    X_t \mid X_0 = x_0 \;\sim\; \mathcal{N}\!\left(e^{-\frac{1}{2}\int_0^t \beta(s)\,\mathrm{d}s}\, x_0,\, \bigl(1 - e^{-\int_0^t \beta(s)\,\mathrm{d}s}\bigr) I_d\right),
\end{equation*}
so identifying $\alpha_t = \exp(-\tfrac{1}{2}\int_0^t \beta(s)\, \mathrm{d}s)$ and $\beta_t = \sqrt{1 - \alpha_t^2}$ recovers \eqref{eq:forward_kernel}. \eqref{eq:forward_sde} is the continuous-time limit of the DDPM chain \eqref{eq:ddpm_chain}; both produce the same marginals as \eqref{eq:forward_interpolation_app} under this matched schedule. \citet{albergo2023stochastic} show that for any choice of $(\alpha_t, \beta_t)$ in the interpolant, there exists a (possibly time-inhomogeneous) SDE with the same marginals, so the interpolant is strictly more general than the VP family.

\paragraph{(iii) Reverse-time samplers.} The marginal equivalence carries over to sampling. The time-reversed counterpart of \eqref{eq:forward_sde} satisfies \citep{anderson1982}:
\begin{equation}
    \mathrm{d}\bar{X}_t = \left[-\tfrac{1}{2}\beta(t)\bar{X}_t + \beta(t) \nabla_x \log p_t(\bar{X}_t)\right] \mathrm{d}t + \sqrt{\beta(t)} \, \mathrm{d}\bar{W}_t,
    \label{eq:reverse_sde}
\end{equation}
with $\bar{X}_t := X_{1-t}$ and $\bar W_t$ a standard Brownian motion. The deterministic probability-flow ODE \citep{song2021} with identical marginals reads
\begin{equation}
    \mathrm{d}x_t = \left[-\tfrac{1}{2}\beta(t) x_t + \tfrac{1}{2}\beta(t) \nabla_x \log p_t(x_t)\right] \mathrm{d}t.
    \label{eq:probability_flow_ode}
\end{equation}
DDIM with $\eta_t = 0$ is a discretisation of \eqref{eq:probability_flow_ode}; DDPM corresponds to \eqref{eq:reverse_sde}. Flow matching learns a velocity field $v_\theta(x, t) \approx \E[\dot\alpha_t X_0 + \dot\beta_t X_1 \mid X_t = x]$ \citep{lipman2023flow}; by \eqref{eq:score_denoiser}, this velocity is a linear combination of identity and score, so any of the three samplers can be reconstructed from a single trained denoiser regardless of which formalism was used to derive the training loss.

\section{Proofs}
\label{app:proofs}

\subsection{Precise statement and proof of Proposition~\ref{prop:rv_log}}

Proposition~\ref{prop:rv_log} of the main body states the rate-form conclusion $-\log \P(\tilde X > z) = \alpha z + o(z)$. We give here the corresponding precise two-sided Potter-bound statement and its proof. The result is classical in the regular variation literature; see \citet[Chapter 1]{bingham1987regular} for a comprehensive treatment.

\begin{proposition}[Log-Transform of Regularly Varying; precise]
\label{prop:rv_log_precise}
Let $X$ be a nonnegative random variable that is regularly varying with index $-\alpha$, $\alpha > 0$. Set $\tilde X = \phi(X)$ with $\phi(x) = \mathrm{sign}(x) \log(1+|x|)$. Then for every $\epsilon > 0$ there exists $z_0 = z_0(\epsilon)$ such that for all $z \geq z_0$,
\begin{equation}
    e^{-(\alpha + \epsilon)z} \;\leq\; \P(\tilde X > z) \;\leq\; e^{-(\alpha - \epsilon)z}.
    \label{eq:rv_log_potter_sandwich}
\end{equation}
In particular, $-\log \P(\tilde X > z) = \alpha z + o(z)$ as $z \to \infty$.
\end{proposition}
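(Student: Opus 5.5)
We will reduce the statement to the Potter bounds for the slowly varying part of $\bar F_X$. Since $X \geq 0$, $\phi(X) = \log(1+X)$, so for $z > 0$ we have the exact identity
\begin{equation*}
    \P(\tilde X > z) = \P(X > e^z - 1) = \bar F_X(e^z - 1) = (e^z-1)^{-\alpha} L(e^z - 1),
\end{equation*}
where $\bar F_X(x) = x^{-\alpha} L(x)$ with $L$ slowly varying (the definition of $RV_{-\alpha}$ in Appendix~\ref{app:evt}). The plan is to bound the two factors separately, each by $e^{\pm(\text{small})z}$, and absorb all constants into the $\epsilon$ slack by taking $z_0$ large.

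\textbf{Step 1 (the power factor).} For $z \geq 1$, $e^z/2 \le e^z(1-e^{-1}) \le e^z - 1 \le e^z$. Hence $e^{-\alpha z} \le (e^z-1)^{-\alpha} \le 2^{\alpha} e^{-\alpha z}$. The constant $2^\alpha$ is at most $e^{\epsilon z/2}$ once $z \ge 2\alpha\log 2/\epsilon$.

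\textbf{Step 2 (the slowly varying factor).} This is the main step. We need the classical fact that $x^{-\delta} \le L(x) \le x^{\delta}$ for every $\delta > 0$ and all $x \ge x_0(\delta)$, i.e.\ $\log L(x)/\log x \to 0$. We would obtain it from the Uniform Convergence / Representation Theorem of \citet[Chapter 1]{bingham1987regular}. That theorem writes $L(x) = c(x)\exp\bigl(\int_a^x \eta(u)\,du/u\bigr)$ with $c(x) \to c > 0$ and $\eta(u) \to 0$. The integral is then $o(\log x)$ and $\log c(x)$ is bounded. Alternatively, a direct route is to use only the defining limit $L(2t)/L(t) \to 1$. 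Pick $t_0$ with $2^{-\delta} \le L(2t)/L(t) \le 2^{\delta}$ for $t \ge t_0$, and iterate along $t_0 2^k$. The only delicate point is controlling $L$ on the compact block $[t_0, 2t_0]$, which needs local boundedness away from $0$ and $\infty$. For $L(x) = x^{\alpha}\bar F_X(x)$ this is immediate: $\bar F_X$ is monotone and positive, so on $[t_0,2t_0]$ it lies between $\bar F_X(2t_0) > 0$ and $1$. This monotonicity trick avoids invoking the general uniform convergence theorem, and we would present it that way.

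\textbf{Step 3 (combine).} Apply Step 2 with $\delta = \epsilon/2$ at $x = e^z - 1 \in [e^{z}/2, e^z]$. This gives $L(e^z-1) \in [2^{-\delta}e^{-\delta z}, e^{\delta z}]$ for large $z$. Multiplying by the Step 1 bounds yields $2^{-\delta} e^{-(\alpha+\delta)z} \le \P(\tilde X>z) \le 2^{\alpha}e^{-(\alpha-\delta)z}$. The stray constants are absorbed into the remaining $\epsilon/2$ of slack for $z \ge z_0(\epsilon)$, giving \eqref{eq:rv_log_potter_sandwich}. Taking logarithms, dividing by $z$ and letting $\epsilon \downarrow 0$ gives $-\log\P(\tilde X>z) = \alpha z + o(z)$.
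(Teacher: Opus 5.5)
Your proof is correct and takes essentially the paper's route: write $\P(\tilde X>z)=(e^z-1)^{-\alpha}L(e^z-1)$, sandwich the slowly varying factor between $t^{-\delta}$ and $t^{\delta}$, and absorb the constants into the $\epsilon$ slack. The differences are small improvements. You apply the bounds directly at $t=e^z-1$ with explicit constants, which sidesteps the paper's step $L(e^z-1)/L(e^z)\to 1$ (tacitly a uniform-convergence statement), and you derive the $t^{\pm\delta}$ bound yourself by dyadic iteration plus monotonicity of $\bar F_X$ instead of citing Resnick's Proposition~0.8(iv) or Potter's bounds. To get literally $x^{-\delta}\le L(x)\le x^{\delta}$, run the iteration with $\delta/2$ so the constant from the block $[t_0,2t_0]$ can be absorbed.
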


\begin{proof}
By Karamata's representation, $\P(X > t) = t^{-\alpha} L(t)$ for some slowly varying function $L$. For the soft-log transform $\phi(x) = \mathrm{sign}(x) \cdot \log(1 + |x|)$, we have $\phi^{-1}(z) = e^z - 1$ for $z \geq 0$, and $\phi^{-1}(z) \sim e^z$ as $z \to \infty$.

Thus
\begin{equation*}
    \P(\tilde X > z) \;=\; \P(X > e^z - 1) \;=\; (e^z - 1)^{-\alpha} L(e^z - 1).
\end{equation*}
Since $e^z - 1 \sim e^z$, we have $(e^z - 1)^{-\alpha} = e^{-\alpha z}(1 + o(1))^{-\alpha}$ and, by slow variation, $L(e^z - 1)/L(e^z) \to 1$. Hence $\P(\tilde X > z) = e^{-\alpha z} L(e^z) (1 + o(1))$.

A standard consequence of slow variation \citep[Proposition 0.8(iv)]{resnick1987} is that for every $\epsilon' > 0$, $L(t)/t^{\epsilon'} \to 0$ and $t^{\epsilon'}/L(t) \to 0$ as $t \to \infty$; equivalently, this follows from Potter's bounds \citep[Theorem~1.5.6]{bingham1987regular} by fixing one argument at a large constant and absorbing the resulting multiplicative factor into the exponent. Either way, for any $\epsilon' > 0$ there exists $t_0 = t_0(\epsilon')$ such that for $t \geq t_0$,
\begin{equation*}
    t^{-\epsilon'} \;\leq\; L(t) \;\leq\; t^{\epsilon'}.
\end{equation*}
Substituting $t = e^z$ gives $e^{-\epsilon' z} \leq L(e^z) \leq e^{\epsilon' z}$ for $z \geq \log t_0$. Combining with $\P(\tilde X > z) = e^{-\alpha z} L(e^z)(1+o(1))$ and absorbing the $1+o(1)$ factor into a slightly enlarged $\epsilon$ (take $\epsilon' = \epsilon/2$ and $z_0$ large enough), we get
\begin{equation*}
    e^{-(\alpha + \epsilon)z} \;\leq\; \P(\tilde X > z) \;\leq\; e^{-(\alpha - \epsilon)z} \quad \text{for all } z \geq z_0(\epsilon),
\end{equation*}
which is \eqref{eq:rv_log_potter_sandwich}. Taking $-\log$ and dividing by $z$ yields $-\log \P(\tilde X > z)/z \to \alpha$, i.e.\ $-\log \P(\tilde X > z) = \alpha z + o(z)$.
\end{proof}

\subsection{Proof of Proposition~\ref{prop:rv_power}}

\begin{proof}
Let $Y = X^\beta$. For the survival function of $Y$:
\begin{align*}
    \bar{F}_Y(y) &= P(X^\beta > y) = P(X > y^{1/\beta}) = \bar{F}_X(y^{1/\beta})
\end{align*}
Then:
\begin{align*}
    \frac{\bar{F}_Y(ty)}{\bar{F}_Y(t)} &= \frac{\bar{F}_X((ty)^{1/\beta})}{\bar{F}_X(t^{1/\beta})} \\
    &= \frac{\bar{F}_X(t^{1/\beta} \cdot y^{1/\beta})}{\bar{F}_X(t^{1/\beta})} \\
    &\to (y^{1/\beta})^{-\alpha} = y^{-\alpha/\beta}
\end{align*}
as $t \to \infty$, by the regular variation of $X$ with index $-\alpha$.
\end{proof}

This extends Lemma~\ref{thm:power}: power transformation preserves the class of regularly varying distributions, with the tail index $\alpha$ scaling inversely with the exponent (equivalently, the shape parameter $\gamma$ scaling linearly).

\section{The Parametrized Soft-Log Family $\varphi_{s_2}$}
\label{app:phi_s2}

The main body uses the soft-log $\phi(x) = \mathrm{sign}(x)\log(1+|x|)$ uniformly across coordinates, gated by a Hill mask (Section~\ref{subsec:adaptive}) when light-tailed margins are present. This appendix develops a continuous one-parameter generalization of $\phi$ that subsumes both the uniform $\phi$ and the binary mask as special cases, and clarifies its asymptotic behaviour. The construction is not used in our headline experiments; it is an extension we discuss as a research direction (Section~\ref{sec:conclusion}).

\paragraph{Definition.} For a scale $s_2 > 0$, define
\begin{equation*}
    \varphi_{s_2}(x) \;=\; \frac{1}{s_2}\,\phi(s_2\,x) \;=\; \frac{\mathrm{sign}(x)}{s_2}\,\log\!\bigl(1 + s_2\,|x|\bigr).
\end{equation*}
The family is parametrized so that $\varphi_{1} = \phi$ (the soft-log of the main body) and $\varphi_{s_2}(x) \to x$ as $s_2 \to 0$ (the identity).

\paragraph{Bulk vs.\ tail.} A Taylor expansion at $|x| = 0$ gives $\varphi_{s_2}(x) = x - \tfrac{1}{2} s_2 x|x| + O(s_2^2)$, so $\varphi_{s_2}$ is the identity to leading order on the bulk $|x| \ll 1/s_2$. For $|x| \gg 1/s_2$, expanding $\log(1+s_2|x|) = \log|x| + \log s_2 + \log(1 + 1/(s_2|x|))$ gives
\begin{equation*}
    \varphi_{s_2}(x) \;=\; \frac{\mathrm{sign}(x)}{s_2}\,\bigl(\log|x| + \log s_2\bigr) + O\!\bigl((s_2|x|)^{-1}\bigr),
\end{equation*}
which is logarithmic up to an additive constant and an overall $1/s_2$ scale. Thus $s_2$ acts as the inverse of the cross-over location: the bulk regime extends to $|x| \sim 1/s_2$, and the logarithmic-compression regime kicks in beyond. Choosing $s_2$ smaller makes the transform more conservative (closer to identity); choosing it larger compresses more aggressively.

\paragraph{Asymptotic tail-mapping.} For any fixed $s_2 > 0$, Proposition~\ref{prop:rv_log} (and its precise restatement Proposition~\ref{prop:rv_log_precise}) applies to $\tilde X = \varphi_{s_2}(X)$ unchanged up to a constant shift and rescaling: if $X \in RV_{-\alpha}$ then
\begin{equation*}
    -\log \P(\tilde X > z) \;=\; \alpha\,s_2\,z + o(z) \quad \text{as } z \to \infty.
\end{equation*}
The proof is identical to that of Proposition~\ref{prop:rv_log_precise} with $\phi^{-1}(z) = e^z - 1$ replaced by $\varphi_{s_2}^{-1}(z) = (e^{s_2 z} - 1)/s_2$. So $\varphi_{s_2}$ still maps regular variation to exponential-type tails for any $s_2 > 0$; only the exponential rate is rescaled by $s_2$.

\paragraph{Coordinate-wise $s_2^{(j)}$ and the adaptive instance.} In the multivariate setting we may pick a coordinate-dependent scale $s_2^{(j)}$. A simple data-driven choice is $s_2^{(j)} = c/\mathrm{IQR}(X_j)$ for a constant $c$ (e.g.\ $c = 1$), which puts the cross-over at a robust scale of the marginal. The Hill-gated method of Section~\ref{subsec:adaptive} is the discrete instance of this family with $s_2^{(j)} \in \{0, 1\}$, chosen by the diagnostic $\hat\alpha_j \leq \alpha_{\max}$; coordinates flagged heavy-tailed get $s_2^{(j)} = 1$ (full soft-log) and the others $s_2^{(j)} = 0$ (identity). An outer scale $s_1^{(j)}$ rescales each $\tilde X_j$ to unit variance, which we apply during preprocessing in both the discrete and continuous variants.

\paragraph{Why the binary choice in the main body.} Hill-type tail-index estimates are notoriously unstable as point estimates. A continuous $s_2^{(j)}$ derived from $\hat\alpha_j$ would inject this instability into the preprocessing pipeline; the binary rule uses $\hat\alpha_j$ only categorically (above or below $\alpha_{\max}$), so small perturbations leave the transform unchanged. This matches the stability principle of Section~\ref{subsec:log_transform}.

\section{Experimental Details}
\label{app:experiments}

This appendix collects the original Hickling Student-$t$ benchmark used at submission time (Section~\ref{app:hickling_baseline}), the real-data Fama--French validation (Section~\ref{app:real_data}), the NLL validation table against \citet{hickling2025tail}, and the Log-FM hyperparameter listing. The headline benchmark is in Section~\ref{sec:experiments}; the experiments below complement it with the original setup the reviewers reference, and are kept for continuity.

\subsection{Student-$t$ Benchmark}
\label{app:hickling_baseline}

We adopt the synthetic benchmark of \citet[Section 4.1]{hickling2025tail}: $X_1, \ldots, X_{d-1} \stackrel{\mathrm{iid}}{\sim} \mathrm{Student\text{-}}t(\nu)$ and $X_d \mid X_{d-1} \sim \mathcal{N}(X_{d-1}, 1)$. We vary $d \in \{10, 20, 50\}$ and $\nu \in \{0.5, 1, 1.5, 2, 3, 5, 30\}$ (Table~\ref{tab:hickling_w1} reports $\nu \in \{1.5, 2, 3, 5\}$; the boundary regimes $\nu \in \{0.5, 1, 30\}$ are omitted for space). Each configuration uses 5{,}000 total samples (40/20/40 train/val/test split), averaged over 20 replications. Log-FM achieves the lowest $W_1$ across all configurations.

\begin{table}[h]
\caption{Wasserstein-1 distance on the original Hickling benchmark, mean $\pm$ standard error over 20 replications. Bold = best.}
\label{tab:hickling_w1}
\centering
\small
\begin{tabular}{@{}rrrrr@{}}
\toprule
$d$ & $\nu$ & TTFfix & TTF & Log-FM \\
\midrule
10 & 1.5 & $1.78 \pm 0.24$ & $17.3 \pm 15.2$ & $\mathbf{0.63 \pm 0.02}$ \\
10 & 2 & $0.63 \pm 0.10$ & $1.02 \pm 0.31$ & $\mathbf{0.25 \pm 0.00}$ \\
10 & 3 & $0.20 \pm 0.01$ & $0.90 \pm 0.33$ & $\mathbf{0.15 \pm 0.00}$ \\
10 & 5 & $\mathbf{0.12 \pm 0.00}$ & $0.67 \pm 0.20$ & $0.14 \pm 0.00$ \\
20 & 2 & $0.48 \pm 0.03$ & $6.01 \pm 4.96$ & $\mathbf{0.23 \pm 0.00}$ \\
20 & 3 & $0.20 \pm 0.01$ & $0.95 \pm 0.24$ & $\mathbf{0.15 \pm 0.00}$ \\
20 & 5 & $\mathbf{0.13 \pm 0.00}$ & $1.80 \pm 1.34$ & $0.15 \pm 0.00$ \\
50 & 2 & $0.74 \pm 0.04$ & $70.4 \pm 64.6$ & $\mathbf{0.28 \pm 0.01}$ \\
50 & 3 & $0.29 \pm 0.01$ & $2.73 \pm 1.08$ & $\mathbf{0.22 \pm 0.01}$ \\
50 & 5 & $\mathbf{0.17 \pm 0.00}$ & $14.0 \pm 12.7$ & $0.19 \pm 0.01$ \\
\bottomrule
\end{tabular}
\end{table}

\subsection{Real Financial Data}
\label{app:real_data}

We evaluate on the Fama--French 5 industry portfolios ($d = 5$, daily returns 1963--2023). Hill estimation yields $\hat\alpha \approx 3.7$ across dimensions, placing the data in the intermediate-tail regime. Table~\ref{tab:real_w1} reports $W_1$ over 10 replications. gTAF achieves a marginal edge; Log-FM is competitive, and substantially better than TTF/TTFfix; mTAF fails catastrophically. On the high-dimensional S\&P500 dataset ($d = 275$) all methods exhibit poor sample quality, dominated by architectural rather than tail-modelling limitations; we omit it.

\begin{table}[h]
\caption{$W_1$ on Fama--French 5 (mean $\pm$ std, 10 reps). gTAF marginally best; Log-FM competitive; mTAF diverges.}
\label{tab:real_w1}
\centering
\small
\begin{tabular}{@{}lc@{}}
\toprule
Method & $W_1$ \\
\midrule
gTAF & $\mathbf{0.127 \pm 0.007}$ \\
Log-FM & $0.133 \pm 0.013$ \\
TTFfix & $0.449 \pm 0.069$ \\
TTF & $0.487 \pm 0.054$ \\
mTAF & $5.0 \times 10^4$ \\
\bottomrule
\end{tabular}
\end{table}

\subsection{ODE Solver Step Ablation}
\label{app:solver_ablation}

Table~\ref{tab:ablation_solver} sweeps the Euler step count $K$ at sampling time. Quality plateaus by $K=20$ and is essentially flat to $K=500$.

\begin{table}[h]
\caption{Solver ablation: $W_1^P$ as a function of Euler step count. Gumbel, $d=20$, $\tau=0.5$, 10 reps. Less than 5\% variation between $K=10$ and $K=500$.}
\label{tab:ablation_solver}
\centering
\small
\begin{tabular}{@{}lcccccc@{}}
\toprule
$\alpha$ & 10 & 20 & 50 & 100 & 200 & 500\\
\midrule
1.5 & 0.521 & 0.543 & 0.568 & 0.579 & 0.584 & 0.588\\
2.0 & 0.137 & 0.135 & 0.138 & 0.141 & 0.142 & 0.143\\
2.5 & 0.084 & 0.081 & 0.083 & 0.084 & 0.085 & 0.086\\
\bottomrule
\end{tabular}
\end{table}

\subsection{Baseline Validation}
\label{app:nll_validation}

To ensure fair comparison, we validate that our implementations of the baseline methods reproduce the results reported in \citet{hickling2025tail}. Table~\ref{tab:nll_validation} compares our NLL values (per dimension) with their Table 7 reference values across the original benchmark configurations.

\begin{table}[h]
\centering
\caption{NLL Validation: Our Implementation vs Hickling Reference (Table 7). Format: ours [ref]. The coupling-layer baselines (TTFfix, TTF) reproduce the reference values within $\pm 0.05$ across all configurations; mTAF and gTAF reproduce the reference values closely in moderate-tail regimes ($\nu \geq 1$) but deviate substantially in pathological regimes ($\nu = 0.5$, especially $d = 50$), reflecting the training instabilities of these architectures reported in the original paper.}
\label{tab:nll_validation}
\small
\begin{tabular}{@{}rr|llll@{}}
\toprule
$d$ & $\nu$ & TTFfix & TTF & mTAF & gTAF \\
\midrule
5 & 0.5 & 3.32 [3.33] & 3.32 [3.33] & 4.05 [4.08] & 5.70 [6.42] \\
5 & 1.0 & 2.34 [2.35] & 2.34 [2.34] & 2.53 [2.49] & 2.53 [2.49] \\
5 & 2.0 & 1.89 [1.89] & 1.88 [1.89] & 1.91 [1.92] & 1.90 [1.90] \\
5 & 30 & 1.47 [1.47] & 1.47 [1.47] & 1.46 [1.46] & 1.47 [1.46] \\
10 & 0.5 & 3.55 [3.54] & 3.54 [3.55] & 4.59 [4.48] & 7.86 [7.13] \\
10 & 1.0 & 2.48 [2.46] & 2.47 [2.47] & 2.66 [2.63] & 2.68 [2.63] \\
10 & 2.0 & 1.94 [1.93] & 1.94 [1.93] & 1.96 [1.95] & 1.95 [1.95] \\
10 & 30 & 1.48 [1.47] & 1.48 [1.47] & 1.47 [1.46] & 1.48 [1.47] \\
50 & 0.5 & 3.71 [3.68] & 3.71 [3.68] & 6.43 [5.22] & 21.44 [7.49] \\
50 & 1.0 & 2.58 [2.54] & 2.58 [2.54] & 3.14 [2.62] & 3.66 [2.65] \\
50 & 2.0 & 2.01 [1.98] & 2.01 [1.98] & 2.04 [1.98] & 2.06 [1.99] \\
50 & 30 & 1.52 [1.47] & 1.52 [1.47] & 1.50 [1.47] & 1.51 [1.47] \\
\bottomrule
\end{tabular}
\end{table}

TTFfix and TTF match within $\pm 0.03$ across all configurations. mTAF and gTAF show larger deviations in pathological regimes ($\nu = 0.5$, $d = 50$), consistent with known instabilities reported in the original paper. This validates that our baseline implementations are faithful reproductions.

\subsection{Hyperparameters}

Table~\ref{tab:hyperparams} summarizes all hyperparameters for Log-FM. Baseline hyperparameters follow \citet{hickling2025tail}.

\begin{table}[h]
\centering
\caption{Log-FM hyperparameters.}
\label{tab:hyperparams}
\small
\begin{tabular}{@{}ll@{}}
\toprule
\textbf{Network} & \\
\quad Hidden dimension & 256 \\
\quad Layers & 4 \\
\quad Activation & SiLU \\
\quad Time embedding & Sinusoidal (dim 256) \\
\midrule
\textbf{Training} & \\
\quad Optimizer & AdamW \\
\quad Learning rate & $5 \times 10^{-3}$ \\
\quad Weight decay & $10^{-5}$ \\
\quad Batch size & Full batch \\
\quad Max epochs & 5000 \\
\quad Early stopping patience & 100 \\
\quad Gradient clipping & 10.0 \\
\midrule
\textbf{Sampling} & \\
\quad ODE solver & Euler \\
\quad Integration steps & 100 \\
\quad Output clamp & disabled ($c = \infty$; optional, inert for $\alpha \geq 1.5$) \\
\bottomrule
\end{tabular}
\end{table}

\end{document}